\newcolumntype{Y}{>{\raggedright\arraybackslash}X}
\theoremstyle{plain}
\newtheorem{theorem}{Theorem}[section]
\newtheorem*{theorem*}{Theorem}
\newtheorem{proposition}[theorem]{Proposition}
\newtheorem*{proposition*}{Proposition}
\theoremstyle{definition}
\theoremstyle{remark}
\newcommand{\red}[1]{\textcolor{red}{#1}}
\newcommand{\blue}[1]{\textcolor{blue}{#1}}
\title{A Discrepancy-Based Perspective on Dataset Condensation}
\author{Tong Chen, Raghavendra Selvan \\
{\small Department of Computer Science, University of Copenhagen, Denmark} \\
\small{\{{\tt toch,raghav\}@di.ku.dk}}}
\date{}
\begin{document}
\maketitle

\begin{abstract}
    Given a dataset of finitely many elements $\mathcal{T} = \{\mathbf{x}_i\}_{i = 1}^N$, the goal of dataset condensation (DC) is to construct a synthetic dataset $\mathcal{S} = \{\tilde{\mathbf{x}}_j\}_{j = 1}^M$ which is significantly smaller ($M \ll N$) such that a model trained from scratch on $\mathcal{S}$ achieves comparable or even superior generalization performance to a model trained on $\mathcal{T}$. Recent advances in DC reveal a close connection to the problem of approximating the data distribution represented by $\mathcal{T}$ with a reduced set of points. In this work, we present a unified framework that encompasses existing DC methods and extend the task-specific notion of DC to a more general and formal definition using notions of discrepancy, which quantify the distance between probability distribution in different regimes. Our framework broadens the objective of DC beyond generalization, accommodating additional objectives such as robustness, privacy, and other desirable properties.
    
    \textbf{Keywords}: dataset condensation, coreset selection, neural network
\end{abstract}

\section{Introduction}
Deep learning has achieved remarkable success across a wide range of real-world applications, largely driven by the development of powerful neural network architectures~\citep{lecun2015deep,schmidhuber2015deep}. In contrast to classical machine learning models, modern deep learning systems rely heavily on large and diverse datasets. However, they often suffer from limited interpretability and tend to underperform when trained on smaller datasets. Both theoretical and empirical studies suggest that increasing the size of both the dataset and the model generally leads to improved performance. This insight has motivated practitioners to scale up training datasets and neural network architectures to unprecedented levels in pursuit of higher accuracy~\citep{kaplan2020scaling, hestness2017deep}. However, such scaling comes at a cost: larger models require significantly more energy and contribute to the growing carbon footprint during both training and deployment~\citep{strubell2019energy,anthony2020carbontracker,
patterson2021carbon}.


Rather than solely pursuing ever-larger models, many studies advocate for the opposite direction: regularization and compression. By incorporating suitable regularization terms into the loss function to effectively solve the dual of an optimization problem with appropriate constraints, one can reduce network size while preserving, or even enhancing, downstream performance \citep{srivastava2014dropout, zhang2016understanding, hinton2015distilling, hubara2016quantized, carreira2017model1, carreira2017model2, molchanov2016pruning, jaderberg2014speeding}. 
Most existing work on compression focuses on the model itself, typically keeping the dataset and learning algorithm fixed while modifying model architecture or parameter magnitudes. Notable exceptions include few-shot learning \citep{chen2019closer, brown2020language}, which uses a limited number of training samples combined with more sophisticated learning algorithms, and dataset condensation (DC)~\citep{zhao2023dataset, zhao2021dataset, cazenavette2022dataset}, which aims to reduce dataset size while keeping both the model architecture and learning algorithm unchanged. Despite growing interest in DC, most existing methods are heuristic in nature and heavily reliant on empirical tuning. The landscape of DC techniques remains fragmented, with approaches differing widely in their motivations, mechanisms, and evaluation metrics. This lack of a coherent, principled foundation makes it difficult to systematically compare methods or extend them to new objectives. \textbf{In this work, we aim to unify and formalize the field by proposing a general framework that encompasses existing DC strategies while highlighting the underlying assumptions they share.} 


To formally categorize deep learning tasks, we consider three fundamental components: the dataset (associated with a probability distribution or measure), the model (or hypothesis), and the algorithm. Let $(\Omega, \mathcal{F}, \mu)$ be a probability space, where $\Omega$ is the sample space, $\mathcal{F}$ is a $\sigma$-algebra, and $\mu$ is a probability measure over $\Omega$. When $\Omega$ corresponds to a finite dataset $\Omega = \{\mathbf{x}_i\}_{i = 1}^N$, the empirical measure is defined as $\mu = \frac{1}{N} \sum_{i = 1}^N \delta_{\mathbf{x}_i}$, where $\delta_{\mathbf{x}_i}$ denotes the Dirac measure at $\mathbf{x}_i$. Let $\mathcal{H}$ denote the hypothesis space. A learning algorithm $\mathcal{A}$ maps a probability measure to a hypothesis, i.e., $\mathcal{A}: \mu \mapsto h \in \mathcal{H}$. A criterion $\varphi$ evaluates the quality of a hypothesis under a given measure, defined as $\varphi: (h, \mu) \mapsto v \in \mathbb{R}$. A hypothesis $h$ is said to be optimal with respect to $\mu$ if it minimizes (or maximizes, depending on context) the criterion $\varphi(h, \mu)$. Typically, learning algorithms are designed to find such an optimal hypothesis. 

Many well-known concepts have emerged from modifying one of the three components (dataset, model, or algorithm) while keeping the others fixed. In contrast, jointly altering two or more components poses greater conceptual and technical challenges. Many existing research directions can be interpreted through the lens of holding some components fixed while varying others. Examples include:

\begin{enumerate}
    \item \textbf{Model training}: Given a fixed probability measure $\mu$ and hypothesis space $\mathcal{H}$, design an algorithm $\mathcal{A}$ that returns a model $h$ with optimal performance under criterion $\varphi$, i.e, $\mathcal{A} (\mu) = \arg \min_{\red{h} \in \mathcal{H}} \varphi (\red{h}, \mu)$;
    
    \item \textbf{Model compression}: Again fixing $\mu$ and $\mathcal{H}$, design an algorithm that not only achieves optimal performance but also satisfies a sparsity constraint, i.e., $\mathcal{A} (\mu) = \arg \min_{\red{h} \in \mathcal{H}} \{\varphi (\red{h}, \mu): \|\red{h}\|_0 \le s\}$;
    
    \item \textbf{Dataset compression}: Fixing the hypothesis space $\mathcal{H}$ and algorithm $\mathcal{A}$, find a smaller dataset $\tilde{\mathcal{D}}$ with associated empirical measure $\tilde{\mu}$ such that the model $h^*_{\tilde{\mu}} = \mathcal{A}(\tilde{\mu})$ remains optimal when evaluated on the original distribution $\mu$, that is, $\tilde{\mu} = \arg \min_{\red{\nu}} \varphi (\mathcal{A} (\red{\nu}), \mu)$, where $\mathcal{A} (\nu) = \arg \min_{\red{h} \in \mathcal{H}} \varphi (\red{h}, \nu)$.
\end{enumerate}

In the following sections, we first introduce a unified theoretical framework for dataset compression. We then use this framework to categorize a wide range of existing approaches according to four key dimensions: (1) The metric used to quantify the distance between probability distributions; (2) The feature space in which the dataset is condensed; (3) The parameter space over which models are optimized; and (4) The loss function and regularization terms employed during training.

\begin{table}[ht]
    \caption{Summary of notation used throughout the paper.}
    \label{tab:notation}
    \centering
    \begin{tabular}{ll}
        \toprule
        Notation & Description \\
        \midrule
        $\|\cdot\|$ & Norm (context-dependent: may refer to $\ell_2$, sup-norm, etc.) \\
        $\Omega$ & Input domain or sample space, $\Omega \subseteq \mathbb{R}^n$ \\
        $\mu$ & Probability distribution (measure) over domain $\Omega$ \\
        $\mathcal{P}(\Omega)$ & Space of all probability distributions over $\Omega$ \\
        $\mathcal{D}$ & Finite dataset $\{(\mathbf{x}_i, y_i)\}_{i=1}^N$ sampled from distribution $\mu$ \\
        $\hat{\mu}_{\mathcal{D}}$ & Empirical distribution induced by dataset $\mathcal{D}$ \\
        $\mathcal{H}(\Omega)$ & Hypothesis space over domain $\Omega$ \\
        $\overline{h(\mathcal{D})}$ & Empirical average of hypothesis $h$ over dataset $\mathcal{D}$ \\
        $h^*_\mu$ & Optimal hypothesis in $\mathcal{H}$ trained under distribution $\mu$ \\
        $\varphi(h, \mu)$ & Criterion of hypothesis $h$ under distribution $\mu$ \\
        
        $\mathcal{A}$ & Learning algorithm mapping datasets to hypotheses \\
        $\mathcal{C}^n(\Omega)$ & Set of $n$-times continuously differentiable functions over $\Omega$ \\
        $\nabla \mathcal{C}^n(\Omega)$ & Set of gradients of functions in $\mathcal{C}^n(\Omega)$ \\

        $g^{\#}\mu$ & Push-forward measure of $\mu$ via measurable map $g$ \\
        $D(\mu_1, \mu_2; \mathcal{H})$ & Generalization discrepancy between $\mu_1$ and $\mu_2$ w.r.t. $\mathcal{H}$ \\
        \bottomrule
    \end{tabular}

\end{table}

\section{Preliminaries and Notations} \label{sec:pre}
Let $\mathbb{R}^n$ denote the $n$-dimensional real vector space. A finite subset $\mathcal{D} = \{\mathbf{x}_i\}_{i = 1}^N \subseteq \mathbb{R}^n$ is referred to as a \emph{dataset}, and naturally induces an empirical distribution $\hat{\mu} = \frac{1}{N} \sum_{i = 1}^N \mathbf{1}_{\mathbf{x}_i}$, where $\mathbf{1}_{\mathbf{x}_i}$ denotes the Dirac measure at $\mathbf{x}_i$, or consistency. We will use distributional notation to represent datasets throughout. Let $\mathcal{P} (\Omega)$ denote the space of probability distributions over $\Omega$. We assume that each sample in $\mathcal{D}$ is drawn i.i.d. from an unknown ground-truth distribution $\mu \in \mathcal{P} (\Omega)$, i.e., $\hat{\mu}$ serves as an empirical approximation of $\mu$. 

To develop learning algorithms, we must first specify the performance criteria under consideration. For standard tasks in deep learning such as classification and regression, let $\ell: \mathbb{R} \times \mathbb{R} \to \mathbb{R}_+$ be a loss function, and let $h \in \mathcal{H} (\Omega)$ denote a model from a given hypothesis space. Two common evaluation criteria are:
\begin{enumerate}
    \item \textbf{Test performance} is defined by the expected loss under the true distribution: $\varphi (h, \mu) = \mathbb{E}_{(\mathbf{x}, y) \sim \mu} [\ell (h(\mathbf{x}), y)]$;

    \item \textbf{Adversarial robustness}, with respect to a perturbation radius $\varepsilon > 0$ and norm $\|\cdot\|$, is measured by the worst-case loss under bounded input perturbations:
    $$\varphi_{\varepsilon} (h, \mu) = \mathbb{E}_{(\mathbf{x}, y) \sim \mu} \bigg[\max_{\|\red{\delta}\| \le \varepsilon} \ell (h (\mathbf{x} + \red{\delta}), y)\bigg].$$
\end{enumerate}

For a fixed distribution $\mu$, the evaluation criterion $\varphi (h, \mu)$ quantifies how well a model $h$ performs on data drawn from $\mu$. In this sense, $\varphi$ serves as a model-dependent statistical measurement of a distribution. When comparing two distributions, a natural approach is to evaluate both using the same criterion and model. However, such a comparison is necessary but not sufficient for distinguishing distributions: two distributions may yield identical performance under one model or criterion yet differ under another. 

To address this limitation, we can define model-independent metrics by either maximizing or averaging the criterion over a hypothesis space $\mathcal{H}$, such as $\varphi (\mu) = \sup_{h \in \mathcal{H}} \varphi (h, \mu)$ or $\mathbb{E}_{h \in \mathcal{H}} [\varphi (h, \mu)]$. We refer to such quantities as \emph{distribution discrepancy (DD)}, which is a criterion that abstracts away the specific model and captures broader statistical properties of the distribution. 

In summary, we categorize statistical metrics for comparing two distributions $\mu_1$ and $\mu_2$ as follows:
\begin{enumerate}
    \item \textbf{Model-specific discrepancy}: $D (\mu_1, \mu_2; h) = |\varphi (h, \mu_1) - \varphi (h, \mu_2)|$, which compares distributions with respect to a fixed model $h$;

    \item \textbf{Model-agnostic discrepancy}: $D (\mu_1, \mu_2; \mathcal{H}) = \sup_{\red{h} \in \mathcal{H}} |\varphi (\red{h}, \mu_1) - \varphi (\red{h}, \mu_2)|$, which captures the maximal difference in performance across all models in the hypothesis space.
\end{enumerate}

It is straightforward that $D (\mu_1, \mu_2; h) = 0$ whenever $\mu_1 = \mu_2$. However, to qualify as a valid statistical distance, a metric must also satisfy the converse: if $D (\mu_1, \mu_2; h) = 0$, then $\mu_1 = \mu_2$. In other words, the metric must be capable of distinguishing any two distinct distributions. Such properties are satisfied by several well-established metrics in statistics, particularly in the context of two-sample testing. Below, we summarize a few canonical examples:
\begin{enumerate}
    \item \textbf{Integral Probability Metric (IPM)}: Let $\varphi (h, \mu) = \mathbb{E}_{(\mathbf{x}, y) \sim \mu} [h (\mathbf{x}, y)]$, and define:
    $$IPM (\mu_1, \mu_2; \mathcal{H}) = \sup_{\red{h} \in \mathcal{H}} \big|\varphi (\red{h}, \mu_1) - \varphi (\red{h}, \mu_2)\big|.$$
    The IPM forms a valid metric if the function class $\mathcal{H}$ is rich enough to separate distributions.

    \item \textbf{Maximum Mean Discrepancy (MMD)}: Assume $\mathcal{H}$ is a reproducing kernel Hilbert space (RKHS), and again define $\varphi (h, \mu) = \mathbb{E}_{(\mathbf{x}, y) \sim \mu} [h (\mathbf{x}, y)]$. Then,
    \begin{align} \label{eq:mmd}
        MMD (\mu_1, \mu_2; \mathcal{H}) = \sup_{\|\red{h}\|_{\mathcal{H}} \le 1} \big|\varphi (\red{h}, \mu_1) - \varphi (\red{h}, \mu_2)\big|.
    \end{align}
    MMD is a special case of IPM where the function class is the unit ball in an RKHS. With a characteristic kernel, MMD satisfies $MMD (\mu_1, \mu_2; \mathcal{H}) = 0 \Leftrightarrow \mu_1 = \mu_2$.

    \item \textbf{Wasserstein Metric (WM)}: Let $d$ be a ground distance on $\Omega$, and fix $p \in [1, +\infty]$. The Wasserstein distance of order $p$ is defined as:
    $$WM_p (\mu_1, \mu_2; \mathcal{H}) = \inf_{\red{\gamma} \in \Gamma (\mu_1, \mu_2)} \big(\mathbb{E}_{((\mathbf{x}_1, y_1), (\mathbf{x}_2, y_2)) \sim \red{\gamma}} [d ((\mathbf{x}_1, y_1), (\mathbf{x}_2, y_2))^p]\big)^{1/p},$$
    where $\Gamma (\mu_1, \mu_2)$ denotes the set of all couplings (joint distributions) with marginals $\mu_1$ and $\mu_2$. In the special case $p = 1$, the Wasserstein metric admits a dual formulation:
    $$WM_1 (\mu_1, \mu_2; \mathcal{H}) = \sup_{Lip(\red{h}) \le 1} \big|\varphi (\red{h}, \mu_1) - \varphi (\red{h}, \mu_2)\big|,$$
    where the supremum is taken over all 1-Lipschitz functions $h$.
\end{enumerate}

We observe that both MMD and the WM are special cases of IPM, distinguished by their choice of hypothesis space $\mathcal{H}$. To briefly recap, suppose $\mathcal{H}$ is an RKHS over $\Omega$, equipped with a kernel $k: \Omega \times \Omega \rightarrow \mathbb{R}$, inner product $\langle \cdot, \cdot \rangle_{\mathcal{H}}$, and associated feature maps $\varphi_{\mathbf{x}}: \Omega \rightarrow \mathcal{H}$, then, by the Riesz representation theorem, $\mathcal{H}$ satisfies the reproducing property: for all $h \in \mathcal{H}$ and $\mathbf{x} \in \Omega$, there exists $\varphi_{\mathbf{x}} \in \mathcal{H}$ such that $h (\mathbf{x}) = \langle h, \varphi_{\mathbf{x}} \rangle_{\mathcal{H}}$ and $k (\mathbf{x}_1, \mathbf{x}_2) = \langle \varphi_{\mathbf{x}_1}, \varphi_{\mathbf{x}_2} \rangle_{\mathcal{H}}$. Popular choices of kernels include:
\begin{enumerate}
    \item \textbf{$\gamma$-exponential kernel}: $k^{\gamma} (\mathbf{x}_1, \mathbf{x}_2) := \exp (-c \|\mathbf{x}_1 - \mathbf{x}_2\|^{\gamma})$ with $0 < \gamma \le 2$. Special cases include the Laplacian kernel ($\gamma = 1$) and Gaussian kernel ($\gamma = 2$);

    \item \textbf{Neural tangent kernel (NTK)}: $k^{\theta} (\mathbf{x}_1, \mathbf{x}_2) := \mathbb{E}_{\theta} \big[\nabla f_{\theta} (\mathbf{x}_1)^T \nabla f_{\theta} (\mathbf{x}_2)\big]$, where $f_{\theta}$ is a neural network with parameters $\theta$.
\end{enumerate}

The \emph{kernel embedding} of a probability distribution $\mu$ into an RKHS $\mathcal{H}$ is defined as $\tilde{\mu} := \mathbb{E}_{\mathbf{x} \sim \mu} [k (\mathbf{x}, \cdot)] = \mathbb{E}_{\mathbf{x} \sim \mu} [\varphi_{\mathbf{x}}]$, where $\varphi_{\mathbf{x}}$ is the feature map associated with kernel $k$. This embedding enables expressing the expectation of any function $f \in \mathcal{H}$ as an inner product between $f$ and the kernel embedding, i.e., $\mathbb{E}_{\mathbf{x} \sim \mu} [f(\mathbf{x})] = \langle f, \tilde{\mu} \rangle_{\mathcal{H}}$. In this way, the MMD between two distributions $\mu_1$ and $\mu_2$ defined in \Cref{eq:mmd} can be rewritten as the norm of the difference between their kernel embeddings:
$$MMD (\mu_1, \mu_2; \mathcal{H}) = \sup_{\|\red{h}\|_{\mathcal{H}} \le 1} \big|\langle \red{h}, \tilde{\mu}_1 - \tilde{\mu}_2 \rangle\big| = \|\tilde{\mu}_1 - \tilde{\mu}_2\|_{\mathcal{H}}.$$
Expanding the squared norm yields a kernel-based expression that avoids explicit use of feature maps:
$$MMD (\mu_1, \mu_2; \mathcal{H})^2 = \mathbb{E}_{\mathbf{x}_1, \mathbf{x}_1' \sim \mu_1} [k (\mathbf{x}_1, \mathbf{x}_1')] - 2 \cdot \mathbb{E}_{\mathbf{x}_1 \sim \mu_1, \mathbf{x}_2 \sim \mu_2} [k (\mathbf{x}_1, \mathbf{x}_2)] + \mathbb{E}_{\mathbf{x}_2, \mathbf{x}_2' \sim \mu_2} [k (\mathbf{x}_2, \mathbf{x}_2')].$$
This kernel-only formulation is particularly useful for empirical estimation of MMD from samples.

The last important property of distributional metrics is injectivity: under what conditions does a vanishing discrepancy imply that two distributions are identical? Formally, we ask when $D (\mu_1, \mu_2; \mathcal{H}) = 0$ implies $\mu_1 = \mu_2$. For $\Omega \subseteq \mathbb{R}^n$, let $\mathcal{C} (\Omega)$ (resp. $\mathcal{C}^{\infty} (\Omega)$) denote the set of all bounded continuous (resp. smooth) functions over $\Omega$. We summarize the sufficient conditions for injectivity in the following proposition:
\begin{proposition} \label{prop:discrepancy}
    Let $D (\mu_1, \mu_2; \mathcal{H})$ be one of the discrepancies measures previously defined. Then $D (\mu_1, \mu_2; \mathcal{H}) = 0$ implies $\mu_1 = \mu_2$ if any of the following holds:

    (1) $D (\mu_1, \mu_2; \mathcal{H}) = IPM (\mu_1, \mu_2; \mathcal{H})$, for $\mathcal{H} = \mathcal{C} (\Omega), \mathcal{C}^{\infty} (\Omega), \nabla \mathcal{C}^{\infty} (\Omega)$;

    (2) $D (\mu_1, \mu_2; \mathcal{H}) = MM\text{D} (\mu_1, \mu_2; \mathcal{H})$, for $\mathcal{H}$ an RKHS with a universal kernel;

    (3) $D (\mu_1, \mu_2; \mathcal{H}) = WM_p (\mu_1, \mu_2; \mathcal{H})$, for any $p \in [1, + \infty]$.
\end{proposition}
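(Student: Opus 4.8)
The plan is to route items (1) and (2) through a single reduction and to dispatch (3) by a direct coupling argument. The common reduction is this: for a class $\mathcal{F}$ of bounded measurable scalar functions, the implication ``$\mathbb{E}_{\mu_1}[f]=\mathbb{E}_{\mu_2}[f]$ for all $f\in\mathcal{F}$'' $\Rightarrow$ $\mu_1=\mu_2$ holds whenever $\mathcal{F}$ is \emph{measure-determining}. So the first thing I would establish is a core lemma: $\mathcal{C}(\Omega)$ is measure-determining on $\Omega\subseteq\mathbb{R}^n$. The argument is the standard Urysohn/Dynkin one: for a bounded closed set $C$ and $\epsilon>0$, the function $f_\epsilon(x)=\max\!\big(0,\,1-\operatorname{dist}(x,C)/\epsilon\big)$ is bounded, continuous, and decreases pointwise to $\mathbf{1}_C$ as $\epsilon\downarrow 0$; dominated convergence then gives $\mu_1(C)=\mu_2(C)$ for every bounded closed $C$, and since those sets form a $\pi$-system generating the Borel $\sigma$-algebra, the $\pi$–$\lambda$ theorem yields $\mu_1=\mu_2$. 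This already settles (1) for $\mathcal{H}=\mathcal{C}(\Omega)$ (with the IPM $\varphi(h,\mu)=\mathbb{E}_\mu[h]$).

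Next I would upgrade the core lemma to the smooth and gradient classes. For $\mathcal{H}=\mathcal{C}^\infty(\Omega)$, replace the Urysohn functions $f_\epsilon$ by mollifications $f_\epsilon*\rho_\delta$: these are smooth, uniformly bounded, compactly supported (when $C$ is bounded), and converge pointwise to $\mathbf{1}_C$ along a diagonal sequence $(\epsilon,\delta)\to 0$, so the same dominated-convergence plus $\pi$–$\lambda$ argument goes through; hence $\mathcal{C}_c^\infty(\Omega)$, a fortiori $\mathcal{C}^\infty(\Omega)$, is measure-determining. For $\mathcal{H}=\nabla\mathcal{C}^\infty(\Omega)$, a vanishing discrepancy means $\mathbb{E}_{\mu_1}[\partial_i\phi]=\mathbb{E}_{\mu_2}[\partial_i\phi]$ for every $\phi\in\mathcal{C}^\infty(\Omega)$ and every coordinate direction $i$ (reading off the $i$-th component of the $\mathbb{R}^n$-valued expectations). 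The key observation is that every $g\in\mathcal{C}_c^\infty(\Omega)$ is realized as $g=\partial_i\phi$ for the smooth, bounded antiderivative $\phi(x)=\int_{-\infty}^{x_i} g(x_1,\dots,t,\dots,x_n)\,dt$ (for which $\nabla\phi$ is bounded since $g$ and its derivatives have compact support), so the $i$-th components of $\nabla\mathcal{C}^\infty(\Omega)$ already exhaust $\mathcal{C}_c^\infty(\Omega)$, and we are back in the case just handled.

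For (2), I would use the identity $MMD(\mu_1,\mu_2;\mathcal{H})=\|\tilde\mu_1-\tilde\mu_2\|_{\mathcal{H}}$ derived in the preliminaries: the discrepancy vanishes iff the kernel mean embeddings coincide, which by the reproducing property is equivalent to $\mathbb{E}_{\mu_1}[h]=\mathbb{E}_{\mu_2}[h]$ for all $h\in\mathcal{H}$. A universal kernel is one whose RKHS is dense in $\mathcal{C}(\Omega)$ in the supremum norm; since a bounded kernel makes $f\mapsto\mathbb{E}_\mu[f]$ sup-norm continuous on $\mathcal{C}(\Omega)$, the equality extends from the dense subset $\mathcal{H}$ to all of $\mathcal{C}(\Omega)$, and the core lemma finishes. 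For (3) I would argue directly without test functions: for $p<\infty$ take an optimal coupling $\gamma\in\Gamma(\mu_1,\mu_2)$; $WM_p=0$ forces $\mathbb{E}_\gamma[d(X,Y)^p]=0$, hence $X=Y$ holds $\gamma$-a.s., i.e. $\gamma$ is concentrated on the diagonal $\Delta=\{(x,x)\}$, so for every Borel $A$, $\mu_1(A)=\gamma(A\times\Omega)=\gamma\big((A\times\Omega)\cap\Delta\big)=\gamma\big((\Omega\times A)\cap\Delta\big)=\gamma(\Omega\times A)=\mu_2(A)$; the case $p=\infty$ is identical with the essential supremum in place of the integral (alternatively one simply invokes that $WM_p$ is a genuine metric on the relevant Wasserstein space).

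The main obstacle I anticipate is not any single step but getting the hypotheses on $\Omega$ coherent across the three cases: ``universal kernel,'' ``$\mathcal{C}(\Omega)$ separates measures,'' and ``an optimal coupling exists'' are cleanest when $\Omega$ is compact (or at least Polish with appropriate moment conditions for $WM_p$), so I would either state such an assumption explicitly or replace ``universal'' by ``characteristic'' and $\mathcal{C}(\Omega)$-density by $c_0$-density on unbounded $\Omega$. The one genuinely delicate technical point is the $\nabla\mathcal{C}^\infty(\Omega)$ case, where one must be careful that the antiderivative construction stays inside the admissible (bounded, smooth) class and that boundedness/integrability on a possibly unbounded $\Omega$ does not obstruct reading off measure-determining test functions; everything else is a routine application of mollification and the $\pi$–$\lambda$ theorem.
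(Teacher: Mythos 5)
Your proposal is correct, and for items (1) and (2) it follows essentially the same skeleton as the paper's (very compressed) proof: both arguments reduce injectivity to the test class being measure-determining by approximating indicator functions, the paper citing Dudley and ``density of smooth functions'' where you supply the actual mechanism (Urysohn-type functions, dominated convergence, the $\pi$--$\lambda$ theorem, mollification, and sup-norm continuity of $f \mapsto \mathbb{E}_\mu[f]$ to pass from a universal RKHS to all of $\mathcal{C}(\Omega)$). Where you genuinely diverge is in the two places the paper waves off with ``the rest follows'': you give an explicit antiderivative construction showing the $i$-th components of $\nabla\mathcal{C}^\infty(\Omega)$ already contain $\mathcal{C}^\infty_c(\Omega)$, which is the one case the paper never actually addresses, and you handle the Wasserstein case by a primal coupling argument (an optimal coupling with $\mathbb{E}_\gamma[d^p]=0$ is concentrated on the diagonal, so the marginals agree) rather than routing it through Kantorovich--Rubinstein duality or the density-of-test-functions template. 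The coupling route is arguably cleaner and more self-contained, since it avoids duality and works uniformly in $p$, at the mild cost of needing existence of an optimal coupling (available here since $\Omega \subseteq \mathbb{R}^n$ is Polish and the cost is lower semicontinuous). Your closing caveats are well placed: the paper implicitly assumes the clean setting (e.g.\ bounded continuous test functions, universal kernels on a nice domain), and the only slip worth noting is cosmetic --- $f \mapsto \mathbb{E}_\mu[f]$ is sup-norm continuous for any probability measure, with boundedness of the kernel needed only so that the mean embeddings $\tilde\mu_i$ exist, not for the continuity itself.
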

\begin{proof}
    (Adapted from \citep[Lemma 9.3.2]{dudley2002real}) Let $U \subseteq \Omega$ be an arbitrary open set. Then the indicator function $\mathbf{1}_{U}$ is a characteristic functional of the distribution $\mu$, and testing against it suffices to distinguish measures. Since $\mathbf{1}_{U}$ can be uniformly approximated by bounded continuous functions, it follows that if $\mathcal{H} (\Omega) \subseteq \mathcal{C} (\Omega)$ is dense, then the discrepancy $D (\mu_1, \mu_2; \mathcal{H})$ detects all differences between $\mu_1$ and $\mu_2$. The rest follows from known results on density of smooth functions in $\mathcal{C} (\Omega)$, and the universal approximation property of RKHSs with universal kernels.
\end{proof}
This observation shows that the density of $\mathcal{H} (\Omega)$ in $\mathcal{C} (\Omega)$ is sufficient (though stronger than necessary) for injectivity. In fact, we only require the following weaker condition: for any open set $U \subseteq \Omega$ and any $\varepsilon > 0$, there exists $h \in \mathcal{H} (\Omega)$ such that $\|h - \mathbf{1}_{U}\| \le \varepsilon$. Notably, $\mathcal{H} (\Omega)$ need not even be a vector space, only that it is rich enough to approximate indicator functions.

\section{Dataset Condensation} \label{sec:dc}
Most of the DC literature focuses on classification tasks, where the condensation process is typically performed on a per-class basis. That is, for each class label $y$, condensation aims to approximate the conditional distribution of the data given the class. Consequently, the key objective becomes measuring the discrepancy between the class-conditional distributions $\mu_1 (\cdot | y)$ and $\mu_2 (\cdot | y)$. In the following discussion, we adopt this convention and assume that $\mu_1$ and $\mu_2$ denote class-conditional distributions. All functions $f$ are understood to act on the input variable $\mathbf{x}$, i.e., $f: \mathbb{R}^n \to \mathbb{R}$. In the current literature, DC is typically studied in the context of supervised learning. Given a target data distribution $\mu_1$, the goal is to construct a synthetic distribution $\mu_2$ such that models trained on  $\mu_2$ generalize similarly to those trained on $\mu_1$. Formally, this objective can be formulated as the following bi-level optimization problem:
\begin{align*} \label{eq:dc}
    \inf_{\red{\mu_2} \in \mathcal{H}} \; \{\varphi (h^*_{\red{\mu_2}}, \mu_1): h^*_{\red{\mu_2}} \in \arg\min_{\blue{h} \in \mathcal{H}} \; \varphi (\blue{h}, \red{\mu_2}) \}, \tag{DC}
\end{align*}
where $\mathcal{H}$ is a hypothesis space and $\varphi: \mathcal{H} \times \mathcal{P} \to \mathbb{R}$ denotes the expected generalization loss, defined as $\varphi (h, \mu) := \mathbb{E}_{(\mathbf{x}, y) \sim \mu} [\ell (h (\mathbf{x}), y)]$, for some loss function $\ell$. Intuitively, we aim to find a synthetic distribution $\mu_2$ such that the model trained on $\mu_2$ performs well on the original distribution $\mu_1$. Naturally, the optimal solution occurs when $\mu_1 = \mu_2$, i.e., when there is no distributional shift. 

In practice, neither $\mu_1$ or $\mu_2$ is known explicitly. Instead, they are approximated by empirical distributions: $\hat{\mu}_1$ which is induced by the full dataset $\mathcal{T} = \{(\mathbf{x}_i, y_i)\}_{i = 1}^N$, and $\hat{\mu}_2$ which is induced by a small synthetic dataset $\mathcal{S} = \{(\tilde{\mathbf{x}}_j, \tilde{y}_j)\}_{j = 1}^M$ with $M \ll N$. In the context of dataset condensation, the size of condensed dataset $\mathcal{S}$ is expected to be much smaller than the original dataset $\mathcal{T}$, i.e., $M \ll N$. To better understand the optimization goal, define the \emph{generalization discrepancy (GD)} between two distributions $\mu_1$ and $\mu_2$ with respect to a hypothesis space $\mathcal{H}$ as $GD (\mu_1, \mu_2; \mathcal{H}) := |\varphi (h^*_{\mu_2}, \mu_1) - \varphi (h^*_{\mu_1}, \mu_1)|$. This measures how much generalization performance deteriorates when we train on $\mu_2$ rather than $\mu_1$. Note that since $h^*_{\mu_1}$ minimizes $\varphi (\cdot, \mu_1)$, minimizing $\varphi (h^*_{\mu_2}, \mu_1)$ over $\mu_2$ is equivalent to minimizing $GD (\mu_1, \mu_2; \mathcal{H})$. We can upper bound this generalization discrepancy using the distribution discrepancy $D (\mu_1, \mu_2; \mathcal{H})$ as follows:
\begin{align*}
    GD (\mu_1, \mu_2; \mathcal{H}) & = |\varphi (h^*_{\mu_2}, \mu_1) - \varphi (h^*_{\mu_2}, \mu_2) + \varphi (h^*_{\mu_2}, \mu_2) - \varphi (h^*_{\mu_1}, \mu_1)| \\
    & \le |\varphi (h^*_{\mu_2}, \mu_1) - \varphi (h^*_{\mu_2}, \mu_2) + \varphi (h^*_{\mu_1}, \mu_2) - \varphi (h^*_{\mu_1}, \mu_1)| \\
    & \le |\varphi (h^*_{\mu_2}, \mu_1) - \varphi (h^*_{\mu_2}, \mu_2)| + |\varphi (h^*_{\mu_1}, \mu_2) - \varphi (h^*_{\mu_1}, \mu_1)| \le 2 \cdot D (\mu_1, \mu_2; \mathcal{H}),
\end{align*}
assuming $\mathcal{H}$ contains both $h^*_{\mu_1}$ and $h^*_{\mu_2}$. This bound reveals two key insights: 
\begin{itemize}
    \item Minimizing the generalization discrepancy $GD (\mu_1, \mu_2; \mathcal{H})$ does not ensure that $\mu_1 = \mu_2$. In fact, small generalization error can arise from significantly different distributions.

    \item Conversely, minimizing the distribution discrepancy $D (\mu_1, \mu_2; \mathcal{H})$ serves as a principled surrogate objective: it bounds the generalization loss difference and is often much easier to optimize.
\end{itemize}
Indeed, generalization discrepancy requires solving a bi-level optimization problem, i.e., training a model at each iteration to evaluate the outer objective, whereas minimizing the distribution discrepancy typically involves a single-level optimization over distributional parameters or dataset embeddings.

Let us now take a closer look at the generalization discrepancy $GD (\mu_1, \mu_2; \mathcal{H})$. Recall that this quantity measures how much the test performance (on distribution $\mu_1$) degrades when we train a model on a different distribution $\mu_2$. In general, $\varphi (h, \mu) = \mathbb{E}_{(\mathbf{x}, y) \sim \mu} [\ell (h (\mathbf{x}), y)]$ is unbounded due to the potential unboundedness of the loss function $\ell$ and the hypothesis $h$. However, this behavior can be controlled by restricting the hypotheses to a uniformly bounded class over a compact input domain. For instance, if we assume normalized inputs $\mathbf{x} \in [0,1]^n$ and continuous hypotheses $h_{\theta}: [0,1]^n \rightarrow \mathbb{R}$ with uniformly bounded parameters $\|\theta\| \le B$, then the image of any model $h \in \mathcal{H}$, that is, $h ([0, 1]^n])$, is contained in a compact set $\mathbf{K} \subseteq \mathbb{R}$. Assuming further that the loss function $\ell (\cdot, \cdot)$ is $L$-Lipschitz continuous in its first argument over $\mathbf{K}$, we obtain 
$$GD (\mu_1, \mu_2; \mathcal{H}) \le L \cdot \|h^*_{\mu_1} - h^*_{\mu_2}\|,$$
where $\|h^*_{\mu_1} - h^*_{\mu_2}\| := \sup_{\mathbf{x} \in [0, 1]^n} |h^*_{\mu_1} (\mathbf{x}) - h^*_{\mu_2} (\mathbf{x})|$ denotes the uniform (sup) norm over inputs. Now suppose both models $h^*_{\mu_1}$ and $h^*_{\mu_2}$ belong to a neural network class with fixed architecture, and let $\theta^*_{\mu_1}$ and $\theta^*_{\mu_2}$ be the parameter vectors of these trained models. If the neural network is Lipschitz continuous with respect to its parameters, i.e., $\sup_{\red{\mathbf{x}} \in [0,1]^n} |h^* (\theta^*_{\mu_1}, \red{\mathbf{x}}) - h^* (\theta^*_{\mu_2}, \red{\mathbf{x}})| \le C \cdot \|\theta^*_{\mu_1} - \theta^*_{\mu_2}\|$, for some global constant $C$. Then we obtain: 
$$\|h^*_{\mu_1} - h^*_{\mu_2}\| \le C \cdot \|\theta^*_{\mu_1} - \theta^*_{\mu_2}\|.$$
This motivates two additional discrepancy measures:
\begin{enumerate}
    \item \textbf{Value discrepancy (VD)}, defined by $VD (\mu_1, \mu_2; \mathcal{H}) := \|h^*_{\mu_1} - h^*_{\mu_2}\|$;

    \item \textbf{Parameter discrepancy (PD)}, defined by $PD (\mu_1, \mu_2; \mathcal{H}) := \|\theta^*_{\mu_1} - \theta^*_{\mu_2}\|$.
\end{enumerate}
These relationships imply a hierarchy of upper bounds on generalization discrepancy:
$$GD (\mu_1, \mu_2; \mathcal{H}) \le L \cdot VD (\mu_1, \mu_2; \mathcal{H}) \le LC \cdot PD (\mu_1, \mu_2; \mathcal{H}).$$
Thus, value discrepancy and parameter discrepancy provide more tractable surrogates for bounding or approximating generalization performance loss in practice. They also highlight that preserving either model predictions or parameter configurations under distribution shifts can serve as effective criteria for dataset condensation.

Another classical method to verify whether two probability distributions are identical is through their characteristic functions. For a distribution $\mu$, the characteristic function is defined as: $F_{\mu} (\mathbf{t}) := \mathbb{E}_{\mathbf{x} \sim \mu} [e^{i \langle \mathbf{x}, \mathbf{t}\rangle}]$ for $\mathbf{t} \in \mathbb{R}^n$. Two distributions $\mu_1$ and $\mu_2$ are equal if and only if $F_{\mu_1} (\mathbf{t}) = F_{\mu_2} (\mathbf{t})$ for all $\mathbf{t}$. This motivates the definition of \emph{characteristic discrepancy (CD)} defined as $CD (\mu_1, \mu_2; \mathcal{H}) := \|F_{\mu_1} - F_{\mu_2}\|$, where the norm is typically taken to be the $L_{\infty}$ or $L_2$ norm over $\mathbf{t} \in \mathbb{R}^n$. Since the set of test functions $\{e^{i \langle \mathbf{x}, \mathbf{t}\rangle}: \mathbf{t} \in \mathbb{R}^n\}$ spans only a subspace of $\mathcal{C} (\Omega)$, it follows that $CD (\mu_1, \mu_2; \mathcal{H}) \le D (\mu_1, \mu_2; \mathcal{H})$, where $D (\mu_1, \mu_2; \mathcal{H})$ denotes a more general IPM over a larger function class $\mathcal{H}$. Importantly, the characteristic discrepancy is purely distributional: it measures the closeness of two distributions in a general sense, without reference to a learning task. Therefore, it provides no direct guarantee on the generalization performance of models trained on one distribution and tested on another. 

To summarize the relationships among the discussed discrepancy measures, we provide the following schematic diagram. Arrows indicate an upper bound (up to a constant), i.e., $A \to B$ implies $A \le c B$ for some constant $c < \infty$.
$$
\begin{tikzpicture}
    \tikzset{dummy/.style= {inner sep=0, outer sep=0}}
    \tikzset{neuron/.style={draw, circle, inner sep=0, outer sep=0, minimum size=1.1cm}};
    \tikzset{box/.style={draw, rectangle, inner sep=0, minimum width=4cm, minimum height=1cm}};

    \node[dummy] (cd) at (0,0) {$CD (\mu_1, \mu_2; \mathcal{H})$};
    \node[dummy] (d) at (2,1) {$DD (\mu_1, \mu_2; \mathcal{H})$};
    \node[dummy] (gd) at (4,0) {$GD (\mu_1, \mu_2; \mathcal{H})$};
    \node[dummy] (vd) at (7,.5) {$VD (\mu_1, \mu_2; \mathcal{H})$};
    \node[dummy] (pd) at (10,1) {$PD (\mu_1, \mu_2; \mathcal{H})$};

    \draw [->] (cd) -- (d); 
    \draw [->] (gd) -- (d);
    \draw [->] (gd) -- (vd);
    \draw [->] (vd) -- (pd);
\end{tikzpicture}
$$

In the context of DC methods, these discrepancies serve as tools to quantify the mismatch between the source and target distributions. By choosing a suitable discrepancy with balance between computational tractability and task relevance, one can guide adaptation strategies that reduce distributional shifts, thereby improving cross-domain generalization.

\section{Change of Space} \label{sec:main}
In the previous section, we discussed various types of discrepancies that measure the similarity between two probability distributions. Some of these are proper distance metrics, while others not only quantify distributional similarity but also provide guarantees on generalization performance. 
At a high level, dataset condensation aims to minimize the discrepancy between a target distribution $\mu_1$ (e.g., the original data distribution) and a synthesized distribution $\mu_2$, by solving: $\min_{\mu_2} D (\mu_1, \mu_2; \mathcal{H})$. This formulation implicitly assumes that both $\mu_1$ and $\mu_2$ are defined over a common input space $\Omega \subseteq \mathbb{R}^n$. Now consider a more general case where there is a measurable mapping $g: (\mathbb{R}^n, \mathcal{F}_1) \to (\mathbb{R}^m, \mathcal{F}_2)$ between two measurable spaces, and where the original space $\mathbb{R}^n$  is endowed with a measure $\mu: \mathcal{F}_1 \to [0, +\infty]$. The \emph{push-forward measure} of $\mu$ under $g$, denoted by $g^{\#}\mu$, is a measure on $\mathbb{R}^m$ defined as:
$$g^{\#}\mu (U) = \mu \circ g^{-1} (U), \; \forall \; U \in \mathcal{F}_2.$$

In deep learning applications, high-dimensional input spaces often lead to optimization challenges, both computationally and statistically, due to the curse of dimensionality. One common strategy to mitigate this issue is to employ latent variable generative models that transform high-dimensional data into a lower-dimensional feature space. A prominent example is the autoencoder, which consists of two components: an \emph{encoder} $g_e: \mathbb{R}^n \to \mathbb{R}^m$ which maps high-dimensional inputs to a low-dimensional latent space, and a \emph{decoder} $g_d: \mathbb{R}^m \to \mathbb{R}^n$ which reconstructs the input from its latent representation. The decoder $g_d$ is intended to approximate the inverse of the encoder $g_e$, so that their composition $g_d \circ g_e$ approximates the identity function $\text{id}_{\mathbb{R}^n}$, or at least locally approximates $\text{id}_{\Omega}$ for a compact domain $\Omega \subseteq \mathbb{R}^n$. 

This structure enables us to push forward probability measures from the original space to the latent space via the encoder. That is, given a measure $\mu$ over $\mathbb{R}^n$, we define the push-forward measure $g_e^{\#}\mu := \mu \circ g_e^{-1}$, which is a measure on $\mathbb{R}^m$. With this formulation, we gain additional flexibility in choosing the space where we perform distribution matching and optimization, either in the original input space or in the latent feature space. See \Cref{tab:change_space} for a summary of possible choices. Importantly, if optimization is conducted in the latent space, we must use the push-forward (via the decoder) to recover the corresponding distribution in the input space.
\begin{table}[th]
\caption{Regimes of optimizing and matching distributions in input and latent spaces.}
    \label{tab:change_space}
    \centering
    \begin{tabular}{|c|c|c|}
        \hline
        \diagbox{Match}{Optimize}& Input space $\mathbb{R}^n$ & Latent space $\mathbb{R}^m$ \\
        \hline
        Input space $\mathbb{R}^n$ & $\displaystyle\min_{\red{\mu_2} \in \mathcal{P} (\mathbb{R}^n)} D (\mu_1, \red{\mu_2}; \mathcal{H})$ & $\displaystyle\min_{\red{\nu} \in \mathcal{P} (\mathbb{R}^m)} D (\mu, g_d^{\#}\red{\nu}; \mathcal{H})$ \\
        \hline
        Latent space $\mathbb{R}^m$ & $\displaystyle\min_{\red{\mu_2} \in \mathcal{P} (\mathbb{R}^n)} D (g_e^{\#}\mu_1, g_e^{\#}\red{\mu_2}; \mathcal{H})$ & $\displaystyle\min_{\red{\nu} \in \mathcal{P} (\mathbb{R}^m)} D (g_e^{\#}\mu, \red{\nu}; \mathcal{H})$ \\
        \hline
    \end{tabular}
\end{table}

According to \Cref{prop:discrepancy}, for suitable choices of function class $\mathcal{H}$, the discrepancy measure $D (\mu_1, \mu_2; \mathcal{H})$ becomes a valid distance metric over distributions on the input space $\mathbb{R}^n$. A natural question arises: {\em Does this metric property still hold when we change the space in which distribution matching and optimization take place?} We consider three distinct cases involving push-forward measures:
\begin{enumerate}
    \item \textbf{Push-forward both distributions via the encoder}: Define 
    $$d (\mu_1, \mu_2) := D (g_e^{\#}\mu_1, g_e^{\#}\mu_2; \mathcal{H}).$$
    This defines a metric on distributions over $\mathbb{R}^n$ if and only if $d(\mu_1, \mu_2) = 0$ implies $\mu_1 = \mu_2$. By definition, this holds precisely when $g_e^{\#}\mu_1 = g_e^{\#}\mu_2$ implies $\mu_1 = \mu_2$, which is true if and only if the encoder $g_e$ is injective.
    
    \item \textbf{Push-forward only the synthetic distribution via the decoder}: Define
    $$d(\mu, \nu) = D (\mu, g_d^{\#} \nu; \mathcal{H}),$$
    where $\mu$ is defnied over $\mathbb{R}^n$ and $\nu$ over $\mathbb{R}^m$. Here $d (\mu, \nu) = 0$ implies that $\mu = g_d^{\#} \nu$. Therefore, $d (\mu, \nu)$ can vanish for some $\nu$ if and only if the decoder $g_d$ is surjective, i.e., every $\mu$ over $\mathbb{R}^n$ can be obtained as the push-forward of some $\nu$.

    \item \textbf{Push-forward only the target distribution via the encoder}: Define
    $$d(\mu, \nu) = D (g_e^{\#}\mu, \nu; \mathcal{H}),$$
    where again $\mu$ is over $\mathbb{R}^n$ and $\nu$ over $\mathbb{R}^m$. In this case, $d (\mu, \nu) = 0$ implies $\nu = g_e^{\#} \mu$, which requires the encoder $g_e$ to be surjective. Furthermore, if we define the synthetic distribution over $\mathbb{R}^n$ as $g_d^{\#} \nu$, then $g_d^{\#} \nu = g_d^{\#} g_e^{\#} \mu = (g_d \circ g_e)^{\#} \mu$. Thus $d(\mu, \nu) = 0$ implies $\mu = g_d^{\#}\nu$ if and only if $g_d \circ g_e = \text{id}_{\mathbb{R}^n}$, i.e., the autoencoder perfectly reconstructs the input.
\end{enumerate}
From the above discussion, we see that there is no free lunch in dataset condensation: while performing optimization or distribution matching directly in the original input space ensures full fidelity, it is computationally expensive due to high dimensionality. On the other hand, dimensionality reduction via an autoencoder offers computational benefits, but necessarily incurs approximation error arising from the non-injectivity of the encoder and the non-surjectivity of the decoder, which can compromise the accuracy of distribution matching. We summarize the relationships among the different matching strategies and their required properties in the following diagram:
$$
\begin{tikzpicture}
    \tikzset{dummy/.style= {inner sep=0, outer sep=0}}
    \tikzset{neuron/.style={draw, circle, inner sep=0, outer sep=0, minimum size=1.1cm}};
    \tikzset{box/.style={draw, rectangle, inner sep=0, minimum width=4cm, minimum height=1cm}};

    \node[dummy] (input) at (0,3) {$\mathcal{P} (\mathbb{R}^n)$};
    \node[dummy] (latent) at (8,3) {$\mathcal{P} (\mathbb{R}^m)$};
    
    \node[neuron] (S) at (0,0) {$\mu_1$};
    \node[neuron] (psiT) at (0,2) {$g_d^{\#}\red{\nu}$};
    \node[neuron] (S') at (0,-2) {$\red{\mu_2}$};

    \node[dummy] (surj) at (-1,1) {$g_d$ surj.};
    \node[dummy] (free) at (-1,-1) {free};

    \node[box] (psi) at (4,2) {Decoder $g_d: \mathbb{R}^m \to \mathbb{R}^n$};
    \node[box] (phi) at (4,-1) {Encoder $g_e: \mathbb{R}^n \to \mathbb{R}^m$};

    \node[neuron] (phiS) at (8,0) {$g_e^{\#}\mu_1$};
    \node[neuron] (T) at (8,2) {$\red{\nu}$};
    \node[neuron] (phiS') at (8,-2) {$g_e^{\#}\red{\mu_2}$};

    \node[dummy] (id) at (9.5,1) {$g_d \circ g_e = \text{id}_{\mathbb{R}^n}$};
    \node[dummy] (inj) at (9,-1) {$g_e$ inj.};

    \draw [->] (T) -- (psi); \draw [->] (psi) -- (psiT);
    \draw [->] (S) -- (phi.north west); \draw [->] (phi.north east) -- (phiS);
    \draw [->] (S') -- (phi.south west); \draw [->] (phi.south east) -- (phiS');

    \draw [<->] (psiT) -- (S); \draw [<->] (S) -- (S');
    \draw [<->] (T) -- (phiS); \draw [<->] (phiS) -- (phiS');
\end{tikzpicture}
$$

In practice, when given a distribution $\mu$ over $\mathbb{R}^n$, the expectation under its push-forward $g^{\#} \mu$ over $\mathbb{R}^m$ can be computed using the change of variable formula $\mathbb{E}_{\mathbf{z} \sim g^{\#} \mu} [\mathbf{z}] = \mathbb{E}_{\mathbf{x} \sim \mu} [g (\mathbf{x})]$. Given a finite dataset $\mathcal{T} = \{\mathbf{x}_i\}_{i = 1}^N$ with samples i.i.d. from $\mu$, we can approximate $\mu$ using the empirical distribution $\hat{\mu}_{\mathcal{T}} = \frac{1}{N} \sum_{i = 1}^N \delta_{\mathbf{x}_i}$. The expectation under the push-forward of this empirical distribution is then $\mathbb{E}_{\mathbf{z} \sim g^{\#} \hat{\mu}_{\mathcal{T}}} [\mathbf{z}] = \mathbb{E}_{\mathbf{x} \sim \hat{\mu}_{\mathcal{T}}} [g (\mathbf{x})] = \frac{1}{N} \sum_{i = 1}^N g (\mathbf{x}_i)$. The formulations given in \Cref{tab:change_space} can thus be replaced with their empirical counterparts, leading to practical optimization objectives commonly used in the dataset condensation (DC) literature. \Cref{tab:change_space_emp} summarizes these empirical formulations using integral probability metrics.
\begin{table}[th]
    \centering
    \makebox[\linewidth][c]{
        \begin{NiceTabular}{|c|c|c|}
            \toprule
            \diagbox{Match}{Optimize}& Input space $\mathbb{R}^n$ & Latent space $\mathbb{R}^m$ \\
            \midrule
            Input space $\mathbb{R}^n$ & $\displaystyle\min_{\red{\mathcal{S}} \subseteq \mathbb{R}^n} \sup_{\blue{h} \in \mathcal{H} (\mathbb{R}^n)} \big|\overline{\blue{h} (\mathcal{T})} - \overline{\blue{h} (\red{\mathcal{S}})}\big|$ & $\displaystyle\min_{\red{\mathcal{Z}} \subseteq \mathbb{R}^m} \sup_{\blue{h} \in \mathcal{H} (\mathbb{R}^n)} \big|\overline{\blue{h} (\mathcal{T})} - \overline{\blue{h} \circ g_d (\red{\mathcal{Z}})}\big|$ \\
            \midrule
            Latent space $\mathbb{R}^m$ & $\displaystyle\min_{\red{\mathcal{S}} \subseteq \mathbb{R}^n} \sup_{\blue{h} \in \mathcal{H} (\mathbb{R}^m)} \big|\overline{\blue{h} \circ g_e (\mathcal{T})} - \overline{\blue{h} \circ g_e (\red{\mathcal{S}})}\big|$ & $\displaystyle\min_{\red{\mathcal{Z}} \subseteq \mathbb{R}^n} \sup_{\blue{h} \in \mathcal{H} (\mathbb{R}^m)} \big|\overline{\blue{h} \circ g_e (\mathcal{T})} - \overline{\blue{h} (\red{\mathcal{Z}})}\big|$ \\
            \bottomrule
        \end{NiceTabular}
    }
    \caption{Regimes of optimizing and matching datasets in input and latent spaces.}
    \label{tab:change_space_emp}
\end{table}

In particular, let $\mathcal{H}(\mathbb{R}^m)$ be an RKHS with a universal kernel $k: \mathbb{R}^m \times \mathbb{R}^m \to \mathbb{R}$. If the encoder $g_e: \mathbb{R}^n \to \mathbb{R}^m$ is injective, then the pullback kernel $k_e := k \circ (g_e \otimes g_e)$ defines a universal kernel on $\mathbb{R}^n$ \citep{li2017mmd}. Consequently, matching two distributions $\mu_1$ and $\mu_2$ in a universal RKHS over the input space $\mathbb{R}^n$ is equivalent to matching their push-forward distributions $g_e^{\#} \mu_1$ and $g_e^{\#} \mu_2$ in the corresponding universal RKHS over the latent space $\mathbb{R}^m$. This equivalence relies on the injectivity of the encoder, which ensures that no information is lost during the transformation.

\section{Taxonomy for Dataset Condensation Methods}
Based on the discussions in \Cref{sec:dc}, we categorize existing DC methods according to the discrepancy metric used and the distribution matching and optimization spaces considered. Further improvements and variants can be derived by either augmenting the dataset or incorporating regularization terms into the main loss function as discussed in \Cref{sec:trick}. In this section, we organize the most popular DC approaches within the theoretical framework we have established. For each category or sub-category, we first introduce the key concept (e.g., the metric or the space), followed by a concise description of the algorithms proposed in the related works. Note that we use a consistent set of notations and terminologies throughout, which may differ from those in the original papers, to maintain alignment with the definitions introduced in the previous sections.

\subsection{Discrepancy-based Classification} \label{sec:metric}
Given a probability distribution $\mu_1$, our goal is to find another distribution $\mu_2$ that is closest to $\mu_1$ under a specified discrepancy metric $D(\mu_1, \mu_2; \mathcal{H})$, where $\mathcal{H}$ denotes a hypothesis class. The optimization problem is formulated as:
$$\min_{\red{\mu_2}} \; D (\mu_1, \red{\mu_2}; \mathcal{H}).$$
In practice, we do not have access to the true distributions $\mu_1$ and $\mu_2$, but instead observe finite i.i.d. samples from them. Thus, we approximate them using empirical distributions. Let $\mathcal{T}$ be a dataset sampled from $\mu_1$ and $\mathcal{S}$ be a synthetic dataset representing $\mu_2$. We then approximate the discrepancy using $D(\hat{\mu}_{\mathcal{T}}, \hat{\mu}_{\mathcal{S}}; \mathcal{H})$, where $\hat{\mu}_{\mathcal{T}}$ and $\hat{\mu}_{\mathcal{S}}$ denote the empirical distributions associated with $\mathcal{T}$ and $\mathcal{S}$, respectively. For convenience, we define the empirical loss of a model $f$ on a dataset $\mathcal{T}$ as $\mathcal{L} (f, \mathcal{T}) := \mathbb{E}_{(\mathbf{x}, y) \sim \hat{\mu}_{\mathcal{T}}} [l(f(\mathbf{x}), y)] = \frac{1}{N} \sum_{i = 1}^N l(f(\mathbf{x}_i), y_i)$, where $l$ is a task-specific loss function.

\subsubsection{Generalization discrepancy} \label{sec:gd}
$$\boxed{\min_{\red{\mathcal{S}}} \; GD (\hat{\mu}_{\mathcal{T}}, \hat{\mu}_{\red{\mathcal{S}}}; \mathcal{H}) \Longleftrightarrow \min_{\red{\mathcal{S}}} \; \mathcal{L} (h^*_{\red{\mathcal{S}}}, \mathcal{T}), \; h^*_{\red{\mathcal{S}}} \in \min_{\blue{h}} \; \mathcal{L} (\blue{h}, \red{\mathcal{S}})}$$
    
    \paragraph{Backpropagation-Through-Time (BPTT)} \citep{wang2018dataset}: Given a fixed original dataset $\mathcal{T}$, this method defines an updated loss function that no longer explicitly depends on $\mathcal{T}$: $\tilde{\mathcal{L}} (h, \eta, \mathcal{S}) := \mathcal{L} (h - \eta \cdot \frac{\partial \mathcal{L}}{\partial h} \big|_{(h, \mathcal{S})}, \mathcal{T})$. This formulation allows us to differentiate through one step of model training on synthetic data $\mathcal{S}$ and evaluate performance on the original data $\mathcal{T}$. The optimization is performed iteratively as follows: At iteration $k$, given current parameters $h^k$, learning rate $\eta^k$, and synthetic dataset $\mathcal{S}^k$, update:
    \begin{align*}
        & \eta^{k+1} \leftarrow \eta^k - \alpha \cdot \frac{\partial \tilde{\mathcal{L}}}{\partial \eta} \bigg|_{(h^k, \eta^k, \mathcal{S}^k)}, \; \mathcal{S}^{k+1} \leftarrow \mathcal{S}^k - \alpha \cdot \frac{\partial \tilde{\mathcal{L}}}{\partial \mathcal{S}} \bigg|_{(h^k, \eta^k, \mathcal{S}^k)} \\
        & h^{k+1} \leftarrow h^k - \eta^{k+1} \cdot \frac{\partial \mathcal{L}}{\partial h} \bigg|_{(h^k, \eta^{k+1}, \mathcal{S}^{k+1})}
    \end{align*}
    This iterative procedure decomposes the original bi-level optimization problem into two sub-problems: (1) updating the synthetic data and learning rate by differentiating through the inner optimization, and (2) updating the model using standard gradient descent on synthetic data.
    
    \paragraph{Convexified Implicit Gradients (CIG)} \citep{loo2023dataset}: This method formulates the DC objective using a bi-level optimization framework. The outer loss is defined as $\tilde{\mathcal{L}} (h^*_{\mathcal{S}}, \mathcal{S}) := \mathcal{L} (h^*_{\mathcal{S}}, \mathcal{T})$, where $h^*_{\mathcal{S}}$ is the model trained to optimality on the synthetic dataset $\mathcal{S}$, and $\mathcal{L}(\cdot, \mathcal{T})$ is the empirical loss on the true dataset $\mathcal{T}$. Rather than relying on backpropagation through the inner optimization trajectory (as in BPTT), CIG employs implicit differentiation via the chain rule on the optimality conditions. At iteration $k$, compute surrogate optimal model $h^k$ (e.g., using kernel ridge regression, discussed in the next section). Then compute implicit gradient via the formula:
    \begin{align*}
        & v^k = \bigg(\frac{\partial^2 \mathcal{L}}{\partial h \partial h^T}\bigg)^{-1} \bigg|_{(h^k, \mathcal{S}^k)} \cdot \left (\frac{\partial \tilde{\mathcal{L}}}{\partial h}\right) \bigg|_{(h^k, \mathcal{S}^k)} \\
        & \nabla g (\mathcal{S}^k) = \frac{\partial \tilde{\mathcal{L}}}{\partial \mathcal{S}} \bigg|_{(h^k, \mathcal{S}^k)} + \frac{\partial}{\partial \mathcal{S}} \bigg(\bigg(\frac{\partial \mathcal{L}}{\partial h}\bigg)^T \bigg|_{(h^k, \mathcal{S}^k)} \cdot v^k\bigg)
    \end{align*}
    Finally update synthetic dataset using the gradient: $\mathcal{S}^{k+1} \leftarrow \mathcal{S}^k - \alpha \cdot \nabla g (\mathcal{S}^k)$. The key distinction from BPTT is that CIG avoids differentiating through the full training trajectory by exploiting the optimality condition $\partial \mathcal{L}(h^*_{\mathcal{S}}, \mathcal{S}) / \partial h = 0$ and applying the implicit function theorem. This leads to a computationally efficient and stable surrogate for the gradient of the outer objective with respect to $\mathcal{S}$.

\subsubsection{Value discrepancy}
$$\boxed{\min_{\red{\mathcal{S}}} \; VD (\hat{\mu}_{\mathcal{T}}, \hat{\mu}_{\red{\mathcal{S}}}; \mathcal{H}) = \|h^*_{\red{\mathcal{S}}} - h^*_{\mathcal{T}}\|, \; h^*_{\red{\mathcal{S}}} \in \min_{\blue{h}} \; \mathcal{L} (\blue{h}, \red{\mathcal{S}}), \; h^*_{\mathcal{T}} \in \min_{\blue{h}} \; \mathcal{L} (\blue{h}, \mathcal{T})}$$
    \paragraph{Feature matching} \citep{wang2022cafe}: Let $h^*_{\mathcal{S}}$ denote the model that minimizes the empirical loss over the synthetic dataset $\mathcal{S}$, i.e.,  $h^*_{\mathcal{S}} \in \min_h \; \mathcal{L} (h, \mathcal{S})$. The goal is to minimize the discrepancy between the feature representations of the real and synthetic datasets, as evaluated by the optimal model $h^*_{\mathcal{S}}$. Specifically, we compare the layer-wise mean features of the original dataset $\mathcal{T}$ and the synthetic dataset $\mathcal{S}$, and define the objective:
    $$\mathcal{S}^* \in \arg \min_{\red{\mathcal{S}}} \sum_{l = 1}^L \big\|\overline{(h^*_{\red{\mathcal{S}}})_l (\mathcal{T})} - \overline{(h^*_{\red{\mathcal{S}}})_l (\red{\mathcal{S}})} \big\|_2^2 + L_{dis}, \; h^*_{\red{\mathcal{S}}} \in \min_{\blue{h}} \; \mathcal{L} (\blue{h}, \red{\mathcal{S}}),$$
    where $L$ is the number of layers in the model $h$, $(h)_l(\mathcal{D})$ denotes the activation (feature) output of layer $l$ on dataset $\mathcal{D}$, $\overline{(\cdot)}$ indicates averaging over the dataset, and $\mathcal{L}_{\mathrm{dis}}$ is a discrimination loss that encourages class separability (as described in \Cref{sec:loss_reg}). The above is a bi-level optimization problem, where the inner optimization finds $h^*_{\mathcal{S}}$ and the outer optimization updates $\mathcal{S}$. It is solved using backpropagation-through-time (BPTT), as discussed in \Cref{sec:gd}.

\subsubsection{Parameter discrepancy}
$$\boxed{\min_{\red{\mathcal{S}}} \; PD (\hat{\mu}_{\mathcal{T}}, \hat{\mu}_{\red{\mathcal{S}}}; \mathcal{H}) = \|\theta^*_{\red{\mathcal{S}}} - \theta^*_{\mathcal{T}}\|, \; \theta^*_{\red{\mathcal{S}}} \in \min_{\blue{\theta}} \; \mathcal{L} (h_{\blue{\theta}}, \red{\mathcal{S}})}$$
    \paragraph{Trajectory matching} \citep{cazenavette2022dataset}: In this approach, the synthetic dataset is optimized to match the training dynamics of a model trained on the real dataset. Specifically, during each training epoch $k$, let $\theta_{k, \mathcal{S}}$ and $\theta_{k, \mathcal{T}}$ denote the model parameters obtained by training on the synthetic dataset $\mathcal{S}$ and real dataset $\mathcal{T}$, respectively. The objective is to minimize the discrepancy between the two trajectories of model parameters:
    $$\mathcal{S}^* \in \arg \min_{\red{\mathcal{S}}} \sum_{k = 1}^{T} \|\theta^*_{k, \red{\mathcal{S}}} - \theta^*_{k, \mathcal{T}}\|, \; \theta^*_{\red{\mathcal{S}}} \in \min_{\blue{\theta}} \; \mathcal{L} (h_{\blue{\theta}}, \red{\mathcal{S}}).$$
    Here, $T$ is the number of training epochs. This bi-level optimization problem is solved using backpropagation-through-time (BPTT), as described in \Cref{sec:gd}.

\subsubsection{Distribution discrepancy}
$$\boxed{\min_{\red{\mathcal{S}}} \; DD (\hat{\mu}_{\mathcal{T}}, \hat{\mu}_{\red{\mathcal{S}}}; \mathcal{H})}$$
    \paragraph{Distribution matching} \citep{zhao2023dataset}: This approach formulates dataset condensation as minimizing a distributional discrepancy between real and synthetic datasets, measured via an IPM. Specifically, let $DD = \text{IPM}$ and consider the function space $\mathcal{H} = \mathcal{F}$ (e.g., all measurable or admissible functions). The objective becomes:
    $$\mathcal{S}^* \in \arg \min_{\red{\mathcal{S}}} \sup_{h \in \mathcal{H}} \big\|\overline{h(\mathcal{T})} - \overline{h(\red{\mathcal{S}})} \big\|_2^2.$$
    In practice, the supremum over $\mathcal{H}$ is approximated by evaluating the discrepancy over a batch of randomly initialized or pretrained models $h$ sampled from the function class.
    
    \paragraph{Gradient matching} \citep{zhao2021dataset}: This method treats dataset condensation as a distribution alignment problem, using an IPM defined over gradients of the loss function. Specifically, let $DD = \text{IPM}$ and let $\mathcal{H} = \mathcal{C}^1$ be the space of continuously differentiable functions. The objective becomes:
    $$\mathcal{S}^* \in \arg \min_{\red{\mathcal{S}}} \sup_{h \in \mathcal{H}} \big\|\overline{\nabla h (\mathcal{T})} - \overline{\nabla h (\red{\mathcal{S}})} \big\|_2^2.$$
    Here, $\nabla h(\mathcal{T})$ and $\nabla h(\mathcal{S})$ denote the gradients of the loss with respect to model parameters, evaluated on real and synthetic datasets, respectively. The goal is to match the average gradient signal induced by the two datasets. In practice, the supremum over $\mathcal{H}$ is approximated using the gradients computed from a batch of randomly initialized or pretrained models $h$.

    \paragraph{Higher-order moment matching} \citep{yu2025teddy}: While standard distribution matching aligns only the first-order moments (means) of features extracted from the real and synthetic datasets, higher-order moment matching extends this idea by also aligning second-order moments (covariances or variances). Specifically, for a function class $\mathcal{H}$ and using an integral probability metric (IPM), the objective becomes:
    $$\mathcal{S}^* \in \arg \min_{\red{\mathcal{S}}} \sup_{h \in \mathcal{H}} \big\|\overline{h(\mathcal{T})} - \overline{h(\red{\mathcal{S}})} \big\|_2^2 + \big\|Var (h(\mathcal{T})) - Var (h(\red{\mathcal{S}})) \big\|_2^2.$$
    Here, $\overline{h(\cdot)}$ denotes the empirical mean of features and $\text{Var}(h(\cdot))$ denotes the feature-wise variance under function $h$. Interestingly, \citep{yu2025teddy} show that gradient matching is a special case of this approach, which implicitly aligns both first- and second-order moments. In practice, the supremum over the function class $\mathcal{H}$ is approximated using the average over a batch of randomly initialized or pretrained models $h$.
    
    \paragraph{M3D} \citep{zhang2024m3d}: This method formulates dataset condensation as a distribution matching problem using the maximum mean discrepancy (MMD) as the discrepancy metric. Specifically, let $\mathcal{H}$ be an RKHS associated with a kernel $k$. Then the objective is:
    $$\mathcal{S}^* \in \arg \min_{\red{\mathcal{S}}} \sup_{\|\blue{h}\|_{\mathcal{H}} \le 1} \big\|\overline{\blue{h} (\mathcal{T})} - \overline{\blue{h} (\red{\mathcal{S}})} \big\|_2^2.$$
    This corresponds exactly to minimizing the MMD between the empirical distributions of $\mathcal{T}$ and $\mathcal{S}$. In practice, common choices for the kernel $k$ include the Gaussian kernel, Laplacian kernel, and the neural tangent kernel (NTK). The MMD is then computed using the kernel embedding formulation described in \Cref{sec:pre}, enabling efficient and differentiable optimization of the synthetic dataset.
    
    \paragraph{Wasserstein distance} \citep{liu2023dataset}: This approach uses the Wasserstein distance as the discrepancy measure to align the empirical distributions of the original and synthetic datasets. Specifically, let $DD = WM$ be the Wasserstein metric and $\mathcal{H}$ be the set of all 1-Lipschitz functions. The optimization problem becomes:
    $$\mathcal{S}^* \in \arg \min_{\red{\mathcal{S}}} \sup_{\blue{h} \in \mathcal{H}} \big\|\overline{\blue{h} (\mathcal{T})} - \overline{\blue{h} (\red{\mathcal{S}})} \big\|_2^2.$$
    By Kantorovich-Rubinstein duality, this corresponds to computing the 1-Wasserstein distance between the empirical distributions $\hat{\mu}\mathcal{T}$ and $\hat{\mu}\mathcal{S}$. In practice, the dual formulation reduces to a tractable linear program (LP) that can be efficiently solved, enabling gradient-based optimization of the synthetic dataset $\mathcal{S}$.
    
    \paragraph{Minimum finite covering} \citep{chen2024adversarial}: This approach uses the Hausdorff distance as the discrepancy metric between the original dataset $\mathcal{T} = {(\mathbf{x}_i, y_i)}_{i=1}^N$ and the synthetic dataset $\mathcal{S} = {(\tilde{\mathbf{x}}_j, \tilde{y}_j)}_{j=1}^M$, which requires no explicit function space $\mathcal{H}$. The condensation problem is formulated as:
    $$\mathcal{S}^* \in \arg \min_{\red{\mathcal{S}}} d_H (\mathcal{T}, \red{\mathcal{S}}),$$
    where the Hausdorff distance is defined by 
    $$d_H (\mathcal{T}, \mathcal{S}) = \max \big\{\max_{\red{i}} \min_{\blue{j}} d(\mathbf{x}_{\red{i}}, \tilde{\mathbf{x}}_{\blue{j}}), \max_{\red{j}} \min_{\blue{i}} d(\mathbf{x}_{\blue{i}}, \tilde{\mathbf{x}}_{\red{j}})\big\}.$$ 
    For tasks such as coreset selection, minimizing the Hausdorff distance can be formulated as a mixed integer linear programming (MILP) problem, allowing for exact or approximate combinatorial optimization of the synthetic subset.

\subsection{Space-based Classification} \label{sec:space}
In \Cref{sec:metric}, we discussed how to practically adapt various discrepancy metrics within the theoretical framework of dataset condensation. However, we have not yet considered variations arising from changes in the matching or optimization spaces. Recall that, in the general formulation \eqref{eq:dc}, the outer optimization is performed over the space of distributions (i.e., the data space), while the inner optimization is over the hypothesis space (i.e., the model space). Consequently, variants of the DC methods introduced in \Cref{sec:metric} can be derived by altering either the data space or the model space.

\subsubsection{Data space}
For the data space, we can move from the original high-dimensional input space to a lower-dimensional latent space using generative models, as discussed in \Cref{sec:gen}. It is important to note that, in practice, an encoder mapping from the high-dimensional space to the latent space is typically not injective, and likewise, a decoder from the latent space back to the high-dimensional space is generally not surjective. As a result, the push-forward metric induced by the generative model no longer satisfies all the properties of a true metric. Nevertheless, this approach can still be effectively used to approximate the data distribution within the framework derived in \Cref{sec:main}.
    \paragraph{Distribution/Gradient matching with pre-trained GANs} \citep{zhao2022synthesizing}: This approach synthesizes a condensed dataset by optimizing a set of latent vectors $\mathcal{Z}$, which are then passed through a pre-trained Generative Adversarial Network (GAN) generator $g_d$. Let $\mathcal{T}$ represent the original dataset. The optimal latent set $\mathcal{Z}^*$ found by minimizing the difference between the feature representations of the original and synthetic data, a process known as distribution or gradient matching:
    $$\mathcal{S}^* = g_d (\mathcal{Z}^*), \; \mathcal{Z}^* \in \arg \min_{\red{\mathcal{Z}}} \sup_{h \in \mathcal{H}} \big\|\overline{h(\mathcal{T})} - \overline{h \circ g_d (\red{\mathcal{Z}})} \big\|_2^2.$$
    Here, $\mathcal{H}$ is a function space of feature extractors. In practice, the supremum over $\mathcal{H}$ approximated by averaging the outputs from a batch of pre-trained models $h$. 
    
    \paragraph{Distribution matching with GAN training} \citep{wang2025dim}: Let $\mathcal{T}$ denote the original dataset, and $g_d$ be the generator of a GAN. The condensed dataset is generated by
    $$\mathcal{S}^* = g_d^* (\mathcal{Z}), \; g_d^* \in \arg \min_{\red{g_d}} \bigg(\underbrace{\sup_{h \in \mathcal{H}} \big\|\overline{h(\mathcal{T})} - \overline{h \circ \red{g_d} (\mathcal{Z})} \big\|_2^2}_{L_{\mathcal{Z}}} + L_{GAN} \bigg),$$
    where $\mathcal{Z}$ is a fixed set of random latent codes, and $L_{GAN}$ is the standard min-max loss of training a conditional GAN with generator $g_d$ and a discriminator $g_c$:
    $$L_{GAN} = \mathbb{E}_{\mathbf{x} \sim \mu} [\log g_c (\mathbf{x} | y)] + \mathbb{E}_{\mathbf{z} \sim \nu} [\log (1 - g_c (g_d (\mathbf{z} | y)))].$$
    The training proceeds by first pretraining $g_d$ through minimizing $L_{GAN}$ for several epochs to obtain a vanilla GAN. Then, the distribution matching loss $L_{\mathcal{Z}}$ is incorporated to further update $g_d$, encouraging the generator not only to produce realistic images but also to generate synthetic data that improves downstream model training performance. In practice, the supremum over the function space $\mathcal{H}$ is approximated by taking averaging over a batch of random initialized models $h$.
    
    \paragraph{Feature matching with GAN training} \citep{zhang2023dataset}: Let $\mathcal{T}$ be the original dataset, $\mathcal{Z}$ be the synthetic latent set, and $g_d$ be the GAN generator. The condensed dataset is generated as
    $$\mathcal{S}^* = g_d^* (\mathcal{Z}^*), \; (\mathcal{Z}^*, g_d^*) \in \arg \min_{\red{g_d}} \bigg(\underbrace{\sum_{l = 1}^L \big\|\overline{h_l (\mathcal{T})} - \overline{h_l \circ \red{g_d} (\mathcal{Z})} \big\|_2^2}_{L_{\mathcal{Z}}} + L_{GAN} + L_* \bigg),$$
    where $h_l$ denotes the intermediate feature maps extracted from layer $l$ of a neural network $h$, and $L$ is the total number of such layers considered. The network $h$ itself is updated every few epochs during the alternating updates of $\mathcal{Z}$ and $g_d$. The additional regularization terms collected in $L_*$ include intra-class diversity loss $L_{intra}$ and inter-class discrimination loss $L_{inter}$, as described in \Cref{sec:loss_reg}.

    \paragraph{Feature matching with diffusion model training} \citep{gu2024efficient}: First, a latent diffusion model (LDM) is trained by optimizing its decoder $g_d^*$ and denoiser $g_{dn}^*$ to minimize the total loss:
    $$L_{total} = L_{diff} + L_*,$$
    where $L_{diff}$ is the primary diffusion training loss, and $L_*$ includes regularization terms such as the representative loss $L_{rep}$ and diversity loss $L_{div}$, as discussed in \Cref{sec:loss_reg}. After training, the condensed dataset is generated by
    $$\mathcal{S}^* = (g_d^* \circ g_{dn}^*) (\mathcal{Z}),$$
    where $\mathcal{Z}$  is a fixed set of random latent noise vectors.
    
    \paragraph{General DC method with random or pre-trained GAN} \citep{cazenavette2023generalizing}: Let $\mu_1$ denote the original data distribution, $\nu_2$ be the synthetic distribution in the latent space, and $g_d$ be the GAN generator. The synthetic distribution in data space is defined as a push-forward $\mu_2^* = g_d^{\#} \nu_2^*$, where the optimal latent distribution $\nu_2^*$ is obtained by minimizing a discrepancy metric $D$ between $\mu_1$ and $\mu_2 = g_d^{\#} \nu_2$, i.e., 
    $$\mu_2^* = g_d^{\#} \nu_2^*, \; \nu_2^* = \arg \min_{\red{\nu_2}} D (\mu_1, g_d^{\#}\red{\nu_2}; \mathcal{H}).$$
    In practice, $g_d$ can be either randomly initialized or pretrained, and the choice of discrepancy metric $D$ is flexible and can correspond to any dataset condensation metric discussed in previous sections.
    
    \paragraph{General DC methods with pre-trained diffusion model} \citep{moser2024latent}: Let $\mu_1$ be the original data distribution, $\nu_2$ be the synthetic distribution in the latent space, and $g_d, g_{dn}$ be the decoder and denoiser components of an LDM. The synthetic data distribution is given by the push-forward $\mu_2^* = (g_d \circ g_{dn})^{\#} \nu_2^*$, where the optimal latent distribution $\nu_2^*$ is obtained by minimizing a discrepancy metric $D$ between $\mu_1$ and $\mu_2 = (g_d \circ g_{dn})^{\#} \nu_2$, that is, 
    $$\mu_2^* = (g_d \circ g_{dn})^{\#} \nu_2^*, \; \nu_2^* = \arg \min_{\red{\nu_2}} D (\mu_1, (g_d \circ g_{dn})^{\#}\red{\nu_2}; \mathcal{H}).$$
    In practice, the LDM is pre-trained, with the denoiser $g_{dn}$ typically implemented by a U-Net architecture. The discrepancy metric $D$ can be chosen from any dataset condensation metric introduced earlier. 
    
    \paragraph{General DC method with pretrained hallucinator-extractor} \citep{liu2022dataset}: Let $\mu_1$ denote the original data distribution, $\nu_2$ be the synthetic distribution in latent space, and $\mathcal{H}$ a hypothesis space of hallucinators (learned data generators). The synthetic distribution is obtained by pushing forward $\nu_2$ through a hallucinator $h \in \mathcal{H}$, i.e.,
    $$\mu_2^* = (h^*)^{\#} \nu_2^*, \; (h^*, \nu_2^*) = \arg \min_{\red{h} \in \mathcal{H}, \; \blue{\nu_2} \in \mathcal{P}} D (\mu_1, \red{h}^{\#}\blue{\nu_2}; \mathcal{H}) + L_*,$$
    where $\mathcal{P}$ denotes the space of probability distributions over the latent space. The regularization term $L_*$ includes task-specific losses (e.g., cross-entropy for classification), a contrastive loss $L_{con}$, and a cosine similarity loss $L_{cos}$, as detailed in \Cref{sec:loss_reg}.

    \paragraph{$K$-means clustering with pretrained diffusion model} \citep{su2024d4m}: Let $\mathcal{T}$ denote the original dataset, and $cl$ represent the operator that computes the centers of $K$-means clustering. We utilize a pretrained LDM with an encoder $g_e^*$, decoder $g_d^*$, and denoiser $g_{dn}^*$. The condensed dataset is generated as follows:
    $$\mathcal{S}^* = (g_d^* \circ g_{dn}^*) (cl(g_e^* (\mathcal{T}))).$$
    The process involves encoding the original dataset $\mathcal{T}$ into a low-dimensional latent space using the pretrained encoder $g_e^*$, applying $K$-means clustering to select $K$ representative centroids in this latent space, and then reconstructing the synthetic dataset $\mathcal{S}^*$ through the pretrained decoder $g_d^*$ and denoiser $g_{dn}^*$. Notably, the model is not trained on the synthetic dataset by minimizing a loss between predictions and fixed labels. Instead, it leverages soft labels from a distilled student network, optimizing the parameters $\theta_S^*$ as:
    $$\theta_S^* \in \arg \min_{\theta_S} L (f_{\theta_T} (\mathbf{x}), f_{\theta_S} (\mathbf{x})),$$
    where $f_{\theta_T}$ is the teacher network and $f_{\theta_S}$ is the student model trained on the synthetic dataset.

\subsubsection{Model space}
For model space, our focus is on the training trajectory and kernel embedding for dataset compression, driven by two key motivations. First, although model parameters reside in a high-dimensional space, the updates during training typically span a low-dimensional subspace, enabling efficient representation of the training dynamics. Second, by leveraging kernel embedding, we can transform feature representations to effectively condense the dataset, capturing essential information in a compact form.

    \paragraph{Distribution matching with expert subspace projection} \citep{ma2023dataset}: Let $\mathcal{T}$ represent the original dataset, $\mathcal{S}$ the synthetic dataset, and $h_{\theta^*} \in \mathcal{H}$ a teacher model with training trajectory $\Theta = \{\theta^*_t\}_{t = 1}^T$, where $\mathbb{S}$ denotes the linear subspace spanned by $\Theta$. The condensed dataset $\mathcal{S}^*$ is obtained by solving:
    $$\mathcal{S}^* \in \arg \min_{\red{\mathcal{S}}} \sup_{\blue{\theta}} \big\|\overline{h_{\blue{\theta}} (\mathcal{T})} - \overline{h_{\blue{\theta}} (\red{\mathcal{S}})} \big\|_2^2 + L_{proj} (\blue{\theta}).$$
    where $L_{proj} (\theta) = \|\theta - \text{Proj}_{\mathbb{S}} (\theta)\|_1$ measures the $\ell_1$-distance between the model parameter $\theta$ and its projection onto $\mathbb{S}$. In practice, the supremum over the function space $\mathcal{H}$ is approximated by averaging over a batch of randomly initialized or pretrained models $h$.

    \paragraph{Kernel Ridge regression with NTK} \citep{nguyen2021dataset-a, nguyen2021dataset-b}: Given a real dataset $\mathcal{T} = {(\mathbf{x}_i, y_i)}_{i = 1}^N$ and a synthetic dataset $\mathcal{S} = {(\tilde{\mathbf{x}}_j, \tilde{y}_j)}_{j = 1}^M$, the solution to linear ridge regression over $\mathcal{S}$ is obtained by solving:
    \begin{align*} \label{ridge}
        \mathbf{w}_{\mathcal{S}} \in \arg\min_{\mathbf{w} \in \mathbb{R}^n} \; \frac{1}{2} \sum_{j = 1}^M \|\mathbf{w}^T \tilde{\mathbf{x}}_j - \tilde{y}_j\|_2^2 + \frac{1}{2} \lambda \|\mathbf{w}\|_2^2. \tag{Ridge}
    \end{align*}
    Let $\tilde{\mathbf{X}}_{\mathcal{S}} = \begin{bmatrix}
        \tilde{\mathbf{x}}_1 & \cdots & \tilde{\mathbf{x}}_M
    \end{bmatrix} \in \mathbb{R}^{n \times M}$ denote the matrix whose columns are the synthetic feature vectors $\tilde{\mathbf{x}}_j$, and let $\tilde{\mathbf{y}}_{\mathcal{S}} \in \mathbb{R}^M$ be the corresponding label vector. Then the closed-form solution of problem \eqref{ridge} is:
    $$\mathbf{w}_{\mathcal{S}} = \tilde{\mathbf{X}}_{\mathcal{S}} \big(\tilde{\mathbf{X}}_{\mathcal{S}}^T \tilde{\mathbf{X}}_{\mathcal{S}} + \lambda \mathbf{I}_M\big)^{-1} \tilde{\mathbf{y}}_{\mathcal{S}}.$$
    To evaluate the quality of the synthetic dataset $\mathcal{S}$, one minimizes the predictive error of $\mathbf{w}_{\mathcal{S}}$ on the original dataset $\mathcal{T}$: 
    $$\mathcal{S}^* \in \arg \min_{\mathcal{S}} \sum_{i = 1}^N \|y_i - \mathbf{w}^T_{\mathcal{S}} \mathbf{x}_i\|_2^2 = \sum_{i = 1}^N \|y_i - \mathbf{x}_i^T \tilde{\mathbf{X}}_{\mathcal{S}} \big(\tilde{\mathbf{X}}_{\mathcal{S}}^T \tilde{\mathbf{X}}_{\mathcal{S}} + \lambda \mathbf{I}_M\big)^{-1} \tilde{\mathbf{y}}_{\mathcal{S}}\|_2^2.$$
    This framework can be extended to nonlinear models via kernel methods. Let $k: \mathbb{R}^n \times \mathbb{R}^n \to \mathbb{R}$ be a positive-definite kernel function, and define the kernel matrices: $K_{\mathcal{S}, \mathcal{S}} \in \mathbb{R}^{M \times M}$ with entries $k(\tilde{\mathbf{x}}_j, \tilde{\mathbf{x}}_{j'})$, $K_{\mathcal{T}, \mathcal{S}} \in \mathbb{R}^{N \times M}$ with entries $k(\mathbf{x}_i, \tilde{\mathbf{x}}_j)$. Then the kernel ridge regression predictor trained on $\mathcal{S}$ is:
    $$h^*_{\mathcal{S}} (\mathbf{x}) = \sum_{j = 1}^M \alpha_j k(\mathbf{x}, \tilde{\mathbf{x}}_j), \; \text{with } \alpha = \big(K_{\mathcal{S}, \mathcal{S}} + \lambda \mathbf{I}_M\big)^{-1} \tilde{\mathbf{y}}_{\mathcal{S}}$$
    The predicted labels on $\mathcal{T}$ are then given by $K_{\mathcal{T}, \mathcal{S}} \big(K_{\mathcal{S}, \mathcal{S}} + \lambda \mathbf{I}_M\big)^{-1} \tilde{\mathbf{y}}_{\mathcal{S}}$. Hence, the optimal synthetic dataset minimizes the empirical loss on $\mathcal{T}$ under the kernel predictor:
    $$\mathcal{S}^* \in \arg \min_{\red{\mathcal{S}}} \mathcal{L} (h^*_{\red{\mathcal{S}}}, \mathcal{T}),$$
    where the loss is typically mean squared error.
    
    \paragraph{Kernel Ridge regression with neural network Gaussian process (NNGP) kernel} \citep{loo2022efficient}: Let $\mathcal{T}$ denote the original dataset and $\mathcal{S}$ represent the synthetic dataset. We define a kernel function $k$ with feature maps $\varphi (\mathbf{x}) = \frac{1}{\sqrt{p}} [h_1 (\mathbf{x}), \ldots, h_p (\mathbf{x})]^T$, where each $h_i$ is drawn from a Gaussian process. The condensed dataset $\mathcal{S}^*$ is obtained by minimizing the loss function $\mathcal{L}$ on the original dataset $\mathcal{T}$ evaluated using the kernel Ridge regression model trained on $\mathcal{S}$:
    $$\mathcal{S}^* \in \arg \min_{\red{\mathcal{S}}} \mathcal{L} (h^*_{\red{\mathcal{S}}}, \mathcal{T}),$$
    where $h^*_{\mathcal{S}} = K_{\cdot, \mathcal{S}} \big(K_{\mathcal{S}, \mathcal{S}} + \lambda \mathbf{I}_M\big)^{-1} \tilde{\mathbf{y}}_{\mathcal{S}}$ is the closed-form solution to Kernel Ridge regression with the NNGP kernel. This solution provides a compact representation of the original dataset, facilitating efficient compression and processing.
    
    \paragraph{Kernel Ridge regression with Neural Feature Kernel (NFK)} \citep{zhou2022dataset}: Let $\mathcal{T}$ denote the original dataset and $\mathcal{S}$ represent the synthetic dataset. We define a kernel function $k$ based on a neural feature mapping, defined by $k (\mathbf{x}_1, \mathbf{x}_2) = h (\mathbf{x}_1)^T h (\mathbf{x}_2)$, where $h$ is a randomly initialized model that maps input data to a higher-dimensional feature space. Then the condensed dataset $\mathcal{S}^*$ is obtained by minimizing the loss function $\mathcal{L}$ with respect to the synthetic dataset, which can be expressed as: 
    $$\mathcal{S}^* \in \arg \min_{\red{\mathcal{S}}} \mathcal{L} (h^*_{\red{\mathcal{S}}}, \mathcal{T}),$$
    where $h^*_{\mathcal{S}} = K_{\cdot, \mathcal{S}} \big(K_{\mathcal{S}, \mathcal{S}} + \lambda \mathbf{I}_M\big)^{-1} \tilde{\mathbf{y}}_{\mathcal{S}}$  is the closed-form solution to Kernel Ridge regression with the NFK kernel. This approach leverages the neural feature mapping to capture complex relationships between data points, enabling effective dataset compression.
    

\subsection{Further Improvements}
To further enhance the quality and effectiveness of the synthetic dataset, we explore the addition of data augmentation techniques and regularization terms to the loss function. The primary goal of these modifications is to improve the downstream performance of models trained on the synthetic dataset, as well as increase its diversity and representativeness compared to the original dataset. 
By incorporating these additional components into the loss function, we aim to create a more robust and generalizable synthetic dataset that can better support a wide range of downstream applications.

\subsubsection{Data augmentation}
The data augmentation techniques discussed in this section are detailed in \Cref{sec:aug}. By augmenting both the original dataset $\mathcal{T}$ and the synthetic dataset $\mathcal{S}$, we can effectively expand the data distribution, thereby enhancing the expressiveness and downstream performance of models trained on the synthetic data. This expansion enables the model to capture a wider range of patterns and relationships, ultimately leading to improved generalization and robustness in downstream applications.
    \paragraph{Distribution matching with Spatial Attention Matching (SAM)} \citep{sajedi2023datadam}: We define an attention map $\mathcal{A}: \mathbb{R}^{c \times n} \to \mathbb{R}^{n}$ as $\mathcal{A} (\mathbf{x}) = \sum_{i = 1}^c \|\mathbf{x}_{i, :}\|_p^p$, which operates on input tensors $\mathbf{x}$ of shape $c\times n$. The synthetic dataset $\mathcal{S}^*$ is obtained by solving the following minimax optimization problem:
    $$\mathcal{S}^* \in \arg \min_{\red{\mathcal{S}}} \sup_{h \in \mathcal{H}} \sum_{l = 1}^L \big\|\overline{\mathcal{A} (h_l (\mathcal{T}))} - \overline{\mathcal{A} (h_l (\red{\mathcal{S}}))} \big\|_2^2,$$
    where $h_l$ denotes the intermediate feature maps of a neural network $h$. In practice, we approximate the supremum over the function space $\mathcal{H}$ by averaging the results from a batch of randomly initialized or pre-trained models $h$.
    
    \paragraph{Distribution matching with multi-formation and model enrichment} \citep{zhao2023improved}: Let $\mathcal{A}_r$ denote a multi-formation operator that transforms the synthetic dataset. The optimal synthetic dataset $\mathcal{S}^*$ is obtained by minimizing the maximum discrepancy between the expected values of a set of functions applied to the original dataset $\mathcal{T}$ and the transformed synthetic dataset $\mathcal{A}(\mathcal{S})$. This can be formulated as: 
    $$\mathcal{S}^* \in \arg \min_{\red{\mathcal{S}}} \sup_{h \in \mathcal{H}} \big\|\overline{h (\mathcal{T})} - \overline{h (\mathcal{A} (\red{\mathcal{S}}))} \big\|_2^2.$$
    In practice, we approximate the supremum over the function space $\mathcal{H}$ by averaging the outputs of a batch of models, including both randomly initialized and pre-trained models. To further enrich the function space, we also incorporate models trained at intermediate epochs, providing a more comprehensive and diverse set of functions to match the distributions.
    
    \paragraph{Gradient matching with data augmentation} \citep{zhao2021dataset-siamese}: Let $\mathcal{A}$ be a differentiable siamese augmentation operator, which applies the same differentiable augmentation to both the original dataset $\mathcal{T}$ and the synthetic dataset $\mathcal{S}$. The optimal synthetic dataset $\mathcal{S}^*$ is obtained by minimizing the maximum discrepancy between the expected gradients of a set of functions applied to the augmented original and synthetic datasets. This can be formulated as: 
    $$\mathcal{S}^* \in \arg \min_{\red{\mathcal{S}}} \sup_{h \in \mathcal{H}} \big\|\overline{\nabla h (\mathcal{A} (\mathcal{T}))} - \overline{\nabla h (\mathcal{A} (\red{\mathcal{S}}))} \big\|_2^2.$$
    In practice, we approximate the supremum over the function space $\mathcal{H}$ by averaging the gradients of a batch of models, including both randomly initialized and pre-trained models $h$. By matching the gradients of the models on the augmented datasets, we aim to capture the underlying patterns and relationships in the data, thereby improving the quality of the synthetic dataset.
    
    \paragraph{Gradient matching with multi-formation} \citep{kim2022dataset}: Let $\mathcal{A}_r$ denote a multi-formation operator as described in \Cref{sec:aug}, which applies a set of transformations to the original and synthetic datasets. The optimal synthetic dataset $\mathcal{S}^*$ is obtained by minimizing the maximum discrepancy between the expected gradients of a set of functions applied to the transformed original and synthetic datasets. This can be formulated as: 
    $$\mathcal{S}^* \in \arg \min_{\red{\mathcal{S}}} \sup_{h \in \mathcal{H}} \big\|\overline{\nabla h (\mathcal{A} (\mathcal{T}))} - \overline{\nabla h (\mathcal{A} (\red{\mathcal{S}}))} \big\|_2^2.$$
    In practice, we approximate the supremum over the function space $\mathcal{H}$ by averaging the gradients of a batch of models, including both randomly initialized and pre-trained models $h$. By leveraging the multi-formation operator to generate diverse transformations of the data, we aim to improve the robustness and generalizability of the synthetic dataset. 
    
    \paragraph{Gradient matching with channel-wise multi-formation} \citep{zhou2024dataset}: Let $\mathcal{A}_c$ denote a channel-wise multi-formation operator as described in \Cref{sec:aug}, which applies a set of transformations to the original and synthetic datasets in a channel-wise manner. The optimal synthetic dataset $\mathcal{S}^*$ is obtained by minimizing the maximum discrepancy between the expected gradients of a set of functions applied to the transformed original and synthetic datasets. This can be formulated as: 
    $$\mathcal{S}^* \in \arg \min_{\red{\mathcal{S}}} \sup_{h \in \mathcal{H}} \big\|\overline{\nabla h (\mathcal{A} (\mathcal{T}))} - \overline{\nabla h (\mathcal{A} (\red{\mathcal{S}}))} \big\|_2^2.$$
    In practice, we approximate the supremum over the function space $\mathcal{H}$ by averaging the gradients of a batch of models, including both randomly initialized and pre-trained models $h$. By leveraging the channel-wise multi-formation operator to generate diverse transformations of the data, we aim to capture the complex patterns and relationships in the data, thereby improving the quality and robustness of the synthetic dataset. 

    \paragraph{Trajectory matching with neural spectral decomposition} \citep{yang2024neural}: Given the collection of spectral tensors $\{\mathbf{T}_i\}$ and kernel tensors $\{\mathbf{K}_j\}$ as described in \Cref{sec:tensor}, we construct a synthetic dataset $\mathcal{S} = \{\mathbf{T}_i \otimes \mathbf{K}_j\}$. To obtain the optimal synthetic dataset $\mathcal{S}^*$, we solve a bi-level optimization problem, which can be formulated as: 
    $$\mathcal{S}^* \in \min_{\red{\mathcal{S}}} \; \|\theta^*_{\red{\mathcal{S}}} - \theta^*_{\mathcal{T}}\|, \; \theta^*_{\red{\mathcal{S}}} \in \min_{\blue{\theta}} \; \mathcal{L} (h_{\blue{\theta}}, \red{\mathcal{S}}).$$
    This bi-level optimization problem is efficiently solved using BPTT, as detailed in \Cref{sec:gd}.

\subsubsection{Loss improvement and regularization}
To further enhance the quality of the synthetic dataset, we can modify the loss objective to incorporate intermediate features of the network, in addition to the output feature. By doing so, we can capture a more comprehensive range of underlying features present in the original dataset, leading to improved loss performance and regularization. This multi-feature loss objective enables the synthetic dataset to better preserve the underlying structure and relationships present in the original data, resulting in a more accurate and informative representation.
    \paragraph{Gradient matching with contrastive signals} \citep{lee2022dataset}: The original gradient matching formulation involves computing the distance between the mean gradients of samples from each distinct class and then summing up these distances across classes. This can be expressed as: 
    $$\mathcal{S}^* \in \arg \min_{\red{\mathcal{S}}} \sup_{\blue{h} \in \mathcal{H}} \sum_{y = 1}^C \big\|\overline{\nabla \blue{h} (\mathcal{T}^y)} - \overline{\nabla \blue{h} (\red{\mathcal{S}}^y)} \big\|_2^2,$$
    where $\mathcal{T}^y$ and $\mathcal{S}^y$ denote the subsets of the original and synthetic datasets, respectively, belonging to class y. In practice, the supremum over the function space $\mathcal{H}$ is approximated by iteratively updating the model $h$ after certain iterations of updating the synthetic dataset $\mathcal{S}$.
    
    To improve upon this formulation, \citep{lee2022dataset} propose a modified version of gradient matching that leverages contrastive signals by summing up the mean gradients across classes before computing the difference between the original and synthetic datasets. This can be expressed as: 
    $$\mathcal{S}^* \in \arg \min_{\red{\mathcal{S}}} \bigg\|\sum_{y = 1}^C \overline{\nabla h^*_{\red{\mathcal{S}}} (\mathcal{T}^y)} - \sum_{y = 1}^C \overline{\nabla h^*_{\red{\mathcal{S}}} (\red{\mathcal{S}}^y)} \bigg\|_2^2, \; h^*_{\mathcal{S}} \in \min_{\blue{h} \in \mathcal{H}} \mathcal{L} (\blue{h}, \red{\mathcal{S}}).$$
    The key difference between these two formulations lies in the order of operations: the original formulation computes the difference between mean gradients for each class separately, whereas the improved formulation sums up the mean gradients across classes before computing the difference.

    \paragraph{Gradient matching with loss curvature regularization} \citep{shin2023loss}: To incorporate loss-curvature information, we define the Hessian matrix $\mathbf{H}_{\mathcal{T}} (h)$ of the loss function $\mathcal{L} (h, \mathcal{T})$ with respect to the model $h \in \mathcal{H}$. Let $\lambda^{+}_{\mathcal{T}, \mathcal{S}} (h)$ denote the maximum eigenvalue of the matrix $\mathbf{H}_{\mathcal{T}} (h) - \mathbf{H}_{\mathcal{S}} (h)$, which represents the difference in loss curvature between the original dataset $\mathcal{T}$ and the synthetic dataset $\mathcal{S}$. The synthetic dataset $\mathcal{S}^*$ is then obtained by solving the following optimization problem: 
    $$\mathcal{S}^* \in \arg \min_{\red{\mathcal{S}}} \bigg(\sup_{\blue{h} \in \mathcal{H}} \big\|\overline{\nabla \blue{h} (\mathcal{T})} - \overline{\nabla \blue{h} (\red{\mathcal{S}})} \big\|_2^2 + \frac{1}{2} \rho \lambda^{+}_{\mathcal{T}, \red{\mathcal{S}}} (\blue{h}) \bigg),$$
    where $\rho$ is a hyperparameter that controls the strength of the loss-curvature regularization. In practice, the supremum over the function space $\mathcal{H}$ is approximated by iteratively updating the model $h$ after certain iterations of updating the synthetic dataset $\mathcal{S}$.

\subsubsection{Efficiency improvement}
When dealing with large and computationally intractable datasets, a two-stage approach can be employed: first, a rough compression can be achieved using naive and greedy clustering methods, such as K-means, to reduce the dataset size; then, advanced DC methods can be applied to further refine and condense the dataset, resulting in a more compact and representative subset of the original data.
    \paragraph{BPTT with randomized truncation} \citep{feng2024embarrassingly}: The standard BPTT algorithm \eqref{eq:dc} requires iterative updates of the model (inner-loop) and the synthetic dataset (outer-loop), resulting in a substantial computational burden. To mitigate this issue, the Truncated BPTT (T-BPTT) method \citep{puskorius1994truncated, williams1990efficient} truncates the outer-loop iteration steps, significantly improving the efficiency of BPTT. Furthermore, the Randomized Truncated BPTT (RaT-BPTT) method \citep{feng2024embarrassingly} introduces an additional layer of randomness to the truncation process, allowing for more flexible and efficient optimization. By incorporating randomness into the truncation, RaT-BPTT can better navigate the complex optimization landscape and converge to a more accurate solution.
    
    \paragraph{Gradient matching with $K$-means clustering} \citep{liu2023dream, liu2023dream+}: To reduce the computational complexity of DC methods, we apply a clustering operator $cl$ to the original dataset $\mathcal{T}$ every $T$ steps, resulting in a subset of $\mathcal{T}$ with a fixed size. This subset serves as a proxy for $\mathcal{T}$, allowing us to efficiently conduct DC methods. The synthetic dataset $\mathcal{S}^*$ is then obtained by solving the following optimization problem: 
    $$\mathcal{S}^* \in \arg \min_{\red{\mathcal{S}}} \sup_{\blue{h} \in \mathcal{H}} \big\|\overline{\nabla \blue{h} (cl (\mathcal{T}))} - \overline{\nabla \blue{h} (\red{\mathcal{S}})} \big\|_2^2.$$
    In practice, we approximate the supremum over the function space $\mathcal{H}$ by iteratively updating the model $h$ after a certain number of iterations of updating the synthetic dataset $\mathcal{S}$. This approach enables efficient gradient matching with a reduced computational burden, while still maintaining the accuracy of the DC methods.

Using the notions of discrepancy, spaces and additional tricks presented, we present a high-level taxonomy of eminent DC methods in \Cref{tab:overview}.

\begin{table}
\vspace{-0.75cm}
    \caption{Overview of the taxonomy for dataset condensation methods. Legend for the symbols in Tricks column: (data$^\dagger$, model$^\ddagger$, augmentation$^*$, regularization$^\mathparagraph$, efficiency$^\mathsection$, multi-objective$^\intercal$).}
    \label{tab:overview}
    \centering
    \tiny
    \setlength{\tabcolsep}{1em} 
    \renewcommand{\arraystretch}{2.1} 
    \rotatebox{0}{
        \begin{tabular}{|c|l| p{17em} |c|}
            \cline{1-4}
            {\bf Discrepancy} & \centering{\bf Description} & {\bf Tricks} & {\bf Method} \\
            \cline{1-4}
            \multirow{5}{*}{GD} & BPTT for gradients  & N/A & \cite{wang2018dataset} \\
            \cline{2-4}
            & BPTT for gradients  & randomized truncation$^\mathsection$ & \cite{feng2024embarrassingly} \\
            \cline{2-4}
            & CIG for gradients  & N/A & \cite{loo2023dataset} \\
            \cline{2-4}
            & N/A & robustness$^\intercal$ & \cite{wu2022towards} \\
            \cline{2-4}
            & N/A & curvature regularization$^\mathparagraph$, robustness$^\intercal$ & \cite{xue2025towards} \\
            \cline{1-4}
            \multirow{8}{*}{VD} & \multirow{4}{*}{intermediate feature matching} & with discrimination loss$^\mathparagraph$ & \cite{wang2022cafe} \\
            \cline{3-4}
            && GAN$^\dagger$, intra-class diversity$^\mathparagraph$, inter-class discrimination$^\mathparagraph$ & \cite{zhang2023dataset} \\
            \cline{3-4}
            && diffusion model training$^\dagger$, representative loss$^\mathparagraph$, diversity loss$^\mathparagraph$ & \cite{gu2024efficient} \\
            \cline{2-4}
            & \multirow{4}{*}{output feature matching} & Kernel Ridge regression with NTK$^\ddagger$ & \cite{nguyen2021dataset-b, nguyen2021dataset-a} \\
            \cline{3-4}
            && Kernel Ridge regression with NNGP$^\ddagger$ & \cite{loo2022efficient} \\
            \cline{3-4}
            && Kernel Ridge regression with NFK$^\ddagger$ & \cite{zhou2022dataset} \\
            \cline{3-4}
            && Kernel Ridge regression with NTK$^\ddagger$, robustness$^\intercal$ & \cite{tsilivis2022can} \\
            \cline{1-4}
            \multirow{2}{*}{PD} & \multirow{2}{*}{trajectory matching} & N/A & \cite{cazenavette2022dataset} \\
            \cline{3-4}
            && neural spectral decomposition$^*$ & \cite{yang2024neural} \\
            \cline{1-4}
            \multirow{20}{*}{DD} & \multirow{6}{*}{distribution matching (IPM)} & N/A & \cite{zhao2023dataset} \\
            \cline{3-4}
            && pre-trained GANs$^\dagger$ & \cite{zhao2022synthesizing} \\
            \cline{3-4}
            && GAN training$^\dagger$ & \cite{wang2025dim} \\
            \cline{3-4}
            && expert subspace projection$^\ddagger$ & \cite{ma2023dataset} \\
            \cline{3-4}
            && spatial attention matching$^*$ & \cite{sajedi2023datadam} \\
            \cline{3-4}
            && multi-formation$^*$ & \cite{zhao2023improved} \\
            \cline{2-4}
            & \multirow{8}{*}{gradient matching (IPM)} & N/A & \cite{zhao2021dataset} \\
            \cline{3-4}
            && siamese augmentation$^*$ & \cite{zhao2021dataset-siamese} \\
            \cline{3-4}
            && multi-formation$^*$ & \cite{kim2022dataset} \\
            \cline{3-4}
            && channel-wise multi-formation$^*$ & \cite{zhou2024dataset} \\
            \cline{3-4}
            && contrastive signal$^\mathparagraph$ & \cite{lee2022dataset} \\
            \cline{3-4}
            && loss curvature regularization$^\mathparagraph$ & \cite{shin2023loss} \\
            \cline{3-4}
            && private set generation$^\intercal$ & \cite{chen2022private} \\
            \cline{3-4}
            && K-means clustering$^\mathsection$ & \cite{liu2023dream, liu2023dream+} \\
            \cline{2-4}
            & higher-order moment matching (IPM) & N/A & \cite{yu2025teddy} \\
            \cline{2-4}
            & mean feature matching (IPM) & K-means clustering$^\mathsection$ & \cite{su2024d4m} \\
            \cline{2-4}
            & \multirow{2}{*}{MMD with NTK} & N/A & \cite{zhang2024m3d} \\
            \cline{3-4}
            && differential privacy$^\intercal$ & \cite{harder2021dpmerf} \\
            \cline{2-4}
            & Wasserstein metric & N/A & \cite{liu2023dataset} \\
            \cline{2-4}
            & Hausdorff distance & N/A & \cite{chen2024adversarial} \\
            \cline{1-4}
            \multirow{3}{*}{General} & \multirow{3}{*}{N/A} & pre-trained GANs$^\dagger$ & \cite{cazenavette2023generalizing} \\
            \cline{3-4}
            && pre-trained diffusion models$^\dagger$ & \cite{moser2024latent} \\
            \cline{3-4}
            && pre-trained hallucinators$^\dagger$ & \cite{liu2022dataset} \\
            \cline{1-4}
        \end{tabular}
    }

\end{table}

\section{Multi-Objective-Aware Dataset Compression}
The primary objective of DC is to distill large datasets into smaller, more efficient representations while preserving downstream performance, i.e., accuracy. By doing so, DC aims to address the long-standing challenge of jointly optimizing efficiency and accuracy in model training. However, in recent years, it has become increasingly evident that accuracy and efficiency are no longer sufficient metrics for evaluating the quality of a dataset. Concerns have been raised regarding the social aspects of models, such as privacy and robustness, and how they are affected when training models on condensed synthetic datasets. Specifically, it remains unclear how DC impacts the privacy and robustness of models, and whether it is possible to jointly improve accuracy, privacy, and robustness through DC. This poses a significant challenge and open question in the field, highlighting the need for further research into the broader implications of DC on model training and deployment.

\subsection{Privacy-aware compression}
In this context, we focus on differential privacy (DP) as the primary notion of privacy. Specifically, a model $h \in \mathcal{H}$ is considered $(\varepsilon, \delta)$-differentially private if it satisfies the following condition: for any possible set of model outputs $\Lambda$ and any pair of datasets $\mathcal{T}$ and $\mathcal{T}'$ that differ by only one element, the probability of $h(\mathcal{T})$ being in $\Lambda$ is bounded by $e^{\varepsilon} \cdot \mathbf{Pr} [h(\mathcal{T}') \in \Lambda] + \delta$. This can be formally expressed as: 
$$\mathbf{Pr} [h (\mathcal{T}) \in \Lambda] \le e^{\varepsilon} \cdot \mathbf{Pr} [h (\mathcal{T}') \in \Lambda] + \delta.$$
By imposing DP constraints on the model, we can establish an upper bound on the amount of information that can be leaked by any individual sample in the dataset. The primary objective of DP-aware dataset condensation is to embed DP into the synthetic datasets, resulting in models that possess enhanced DP properties. It is worth noting that other notions of privacy, such as membership inference privacy (MIP) \citep{shokri2017membership}, have also been explored. For instance, \citep{dong2022privacy} have demonstrated that synthetic datasets obtained via distribution matching can provide a certain level of MIP. However, the development of DP-aware dataset condensation methods remains a crucial area of research, as it has the potential to provide strong privacy guarantees for models trained on synthetic datasets.

    \paragraph{Distribution matching using MMD for DP} \citep{harder2021dpmerf}: To achieve differential privacy (DP) in distribution matching, we add random Gaussian noise  to the original dataset to induce DP guarantees. The privacy-aware synthetic dataset $\mathcal{S}^*$ is obtained by solving the following optimization problem: 
    $$\mathcal{S}^* \in \arg \min_{\red{\mathcal{S}}} \sup_{\|h\|_{\mathcal{H}} \le 1} \big\|\overline{h (\mathcal{T} + \varepsilon)} - \overline{h (\red{\mathcal{S}})} \big\|_2^2,$$
    where $\varepsilon$ is a random Gaussian noise vector drawn from a normal distribution $\mathcal{N} (0, \Sigma)$, and $h$ is a feature map chosen to be random Fourier features. By adding noise to the original dataset, we can ensure that the synthetic dataset $\mathcal{S}^*$ satisfies the DP guarantees, while the use of random Fourier features as the feature map enables an efficient and effective distribution matching process. This approach provides a promising way to generate synthetic datasets that balance data utility and privacy.
    
    \paragraph{Gradient matching for private set generation} \citep{chen2022private}: To generate private synthetic datasets, we add random random Gaussian noise to the average gradients computed over the original dataset, which induces DP guarantees. The privacy-aware synthetic dataset $\mathcal{S}^*$ is obtained by solving the following optimization problem: 
    $$\mathcal{S}^* \in \arg \min_{\red{\mathcal{S}}} \sup_{\blue{h} \in \mathcal{H}} \big\|\big(\overline{\nabla \blue{h} (\mathcal{T})} + \epsilon \big) - \overline{\nabla \blue{h} (\red{\mathcal{S}})} \big\|_2^2,$$
    where $\varepsilon$ is a random Gaussian noise vector drawn from a normal distribution $\mathcal{N} (0, \Sigma)$. In practice, the supremum over the function space $\mathcal{H}$ is approximated by iteratively updating the model h after a certain number of iterations of updating the synthetic dataset $\mathcal{S}$. By adding noise to the average gradients, this approach ensures that the synthetic dataset $\mathcal{S}^*$ satisfies the DP guarantees, while the gradient matching process enables the generation of high-quality synthetic datasets that preserve the underlying patterns and relationships of the original data.

\subsection{Robustness-aware compression}
In this context, we focus on adversarial robustness as the primary notion of robustness. Given a perturbation radius $\varepsilon \ge 0$, a norm $\|\cdot\|$, and labeled data $(\mathbf{x}, y)$ sampled from a probability distribution $\mu$, the $\varepsilon$-adversarial robustness of a model $h \in \mathcal{H}$ is quantified by the probability that the model correctly classifies the input data even when it is perturbed by an adversarial attack within the specified radius: $\mathbf{Pr} [h (\mathbf{x} + \delta) = y, \; \forall \|\delta\| \le \varepsilon]$. A common approach to improving the adversarial robustness of a model is to minimize the adversarial loss function:
$$\mathcal{L}^{adv} (h, \mu; \varepsilon) := \mathbb{E}_{(\mathbf{x}, y) \sim \mu} \bigg[\max_{\|\red{\delta}\| \le \varepsilon} l (h(\mathbf{x} + \red{\delta}), y)\bigg].$$
Recent work by \citep{ilyas2019adversarial} has shown that it is possible to extract robust and non-robust features from a dataset, such that standard model training over robust (or non-robust) features leads to robust (or non-robust) models. This raises an intriguing question: {\em Can we further enhance the condensed dataset by embedding adversarial robustness into it}? In other words, can we design a dataset condensation method that not only preserves the key features of the original dataset but also improves the robustness of models trained on the condensed dataset?

    \paragraph{Adversarially robust condensation by bi-level optimization} \citep{wu2022towards}: To obtain a synthetic dataset with robustness guarantees, similar to formulation \eqref{eq:dc} for dataset condensation, we can employ the following bi-level optimization objective:
    \begin{align*} \label{eq:RobDC}
        \mu_2^* \in \inf_{\red{\mu_2}} \; \mathcal{L}^{adv} (h^*_{\red{\mu_2}}, \mu_1; \varepsilon), \; \text{s.t. } h^*_{\red{\mu_2}} \in \arg\min_{\blue{h}} \; \mathcal{L} (\blue{h}, \red{\mu_2}), \tag{RobDC}
    \end{align*}
    where $\mu_1$ represents the original distribution and $\mu_2$ denotes the synthetic distribution. The formulation in \eqref{eq:RobDC} reveals that the model trained on the optimized synthetic distribution $\mu_2^*$ achieves optimal robustness performance on the original dataset. By solving this bi-level optimization problem, we can generate a synthetic dataset that not only preserves the key characteristics of the original data but also enhances the robustness of models trained on it. This approach provides a promising way to improve the adversarial robustness of machine learning models while maintaining their performance on the original task.
    
    \paragraph{Adversarially robust condensation by kernel Ridge regression} \citep{tsilivis2022can}: Let $h^*_{\mathcal{S}} = K_{\cdot, \mathcal{S}} \big(K_{\mathcal{S}, \mathcal{S}} + \lambda \mathbf{I}_M\big)^{-1} \tilde{\mathbf{y}}_{\mathcal{S}}$ denote the closed-form solution of kernel Ridge regression trained on the synthetic dataset $\mathcal{S}$. The method jointly optimizes the synthetic dataset $\mathcal{S}$ and an adversarial perturbation $\delta$ applied to the original dataset $\mathcal{T}$:
    \begin{align*} \label{eq:RidgeDC}
        (\mathcal{S}^*, \delta^*) \in \arg \min_{\red{\mathcal{S}}} \max_{\blue{\delta}} \mathcal{L} (h^*_{\red{\mathcal{S}}}, \mathcal{T} + \blue{\delta}). \tag{RidgeDC}
    \end{align*}
    This formulation defines a two-player game: the adversarial perturbation $\delta$ aims to degrade the performance of the model $h^*_{\mathcal{S}}$, while the synthetic dataset $\mathcal{S}$ is optimized to ensure robustness and strong generalization performance of $h^*_{\mathcal{S}}$ even on the perturbed dataset. Compared to the adversarial objective in \citep{wu2022towards}, which applies perturbations at the level of individual data points and maximizes each per-instance loss before averaging, this method performs the maximization over the regularized loss on the entire perturbed dataset. This shift from pointwise to dataset-level robustness yields a more global form of adversarial resistance during condensation.
    
    \paragraph{Adversarially robust condensation by curvature regularization} \citep{xue2025towards}: Motivated by the established link between adversarial robustness and model smoothness, which is quantified through curvature and the Lipschitz constant, this method augments the standard dataset condensation objective with a curvature regularization term that encourages locally flat loss landscapes:
    \begin{align*} \label{eq:CurvDC}
        \mu_2^* \in \inf_{\red{\mu_2}} \; \mathcal{L} (h^*_{\red{\mu_2}}, \mu_1) + \lambda L_*, \; \text{s.t. } h^*_{\red{\mu_2}} \in \arg\min_{\blue{h}} \; \mathcal{L} (\blue{h}, \red{\mu_2}), \tag{CurvDC}
    \end{align*}
    Here, $\mu_1$ is the original data distribution, $\mu_2$ is the learned synthetic data distribution, and $L_*$ is a regularization term estimating the largest eigenvalue of the Hessian of the loss function $l$, serving as a proxy for curvature. By training models on the synthetic distribution $\mu_2$ while explicitly penalizing high curvature, this approach biases learning toward flatter minima, i.e., regions of the loss surface with low sensitivity to input perturbations. Consequently, the resulting models exhibit reduced Lipschitz constants and enhanced adversarial robustness, even though they are trained on a compressed synthetic dataset.


\section{Conclusion and Discussion}
In this work, we introduced a unified theoretical framework for DC, aiming to provide a principled foundation for a field that has so far developed in a largely heuristic and fragmented manner. By formalizing DC through the lens of distribution discrepancies and data/hypothesis spaces, we established a general definition that situates existing methods within a common structure. This allowed us to categorize the landscape of DC approaches along several key dimensions: the discrepancy metric employed, the space in which distributions are matched and optimized, the parameterization of models, and the choice of loss functions and regularization.

Our framework not only clarifies the assumptions and limitations of current approaches but also highlights how DC can be extended beyond its conventional goal of preserving generalization accuracy. In particular, we discussed how DC can be viewed as a multi-objective problem, where efficiency, robustness, and privacy become central considerations alongside accuracy. This perspective opens the door to more holistic objectives in the design of synthetic datasets and learning algorithms.

Looking forward, the theoretical contributions presented here call for further empirical investigation. A systematic benchmark grounded in our framework would provide a much-needed basis for comparing methods on equal footing, assessing their strengths and weaknesses across multiple objectives, and guiding the development of new algorithms. Establishing such a benchmark represents an important next step toward bridging theory and practice in dataset condensation, ultimately advancing the reliability and applicability of DC in real-world settings.

\section*{Acknowledgments and Disclosure of Funding}
TC and RS acknowledge funding received under  European Union’s Horizon Europe Research and Innovation programme under grant agreements No. 101070284, No. 10107040 and No. 101189771. RS also acknowledges funding received under Independent Research Fund Denmark (DFF) under grant 4307-00143B. The authors thank members of \hyperlink{https://saintslab.github.io/}{SAINTS Lab} for useful discussions. 

\newpage
\bibliographystyle{abbrvnat}
\bibliography{main.bib}

\newpage
\appendix
\section{Dataset Condensation in Practice} \label{sec:trick}
In \Cref{sec:dc}, we introduced a theoretical framework for dataset condensation, which consists of two main steps: matching distributions and optimizing synthetic datasets. Both steps can be performed either in the original high-dimensional input space or within a lower-dimensional latent space. In this section, we explore practical strategies for constructing such latent spaces, as well as technical techniques for enhancing downstream performance and improving the quality of the synthetic data.

\subsection{Generative Modeling} \label{sec:gen}
A generative model typically consists of two components: an encoder $g_e: \mathbb{R}^n \to \mathbb{R}^m$ and a decoder $g_d: \mathbb{R}^m \to \mathbb{R}^n$, where the input dimension $n$ is significantly larger than the latent dimension $m$, i.e., $n \gg m$. Given a data distribution $\mu$ over $\mathbb{R}^n$, the goal of the generative model is to learn a structured latent representation of the data such that we can sample a latent vector $\mathbf{z} \in \mathbb{R}^m$ from the latent space and reconstruct a data point $\mathbf{x} = g_d (\mathbf{z}) \in \mathbb{R}^n$ that approximately lies in the support of $\mu$.

\subsubsection{Variational Autoencoder (VAE) \citep{kingma2013auto}}
A Variational Autoencoder (VAE) consists of an encoder $g_e$ and a decoder $g_d$. The encoder maps input data to a latent distribution, typically matching it to a prior such as a multivariate Gaussian. The decoder reconstructs data samples by mapping latent variables drawn from this prior back to the original data space. This framework enables both representation learning and generative modeling under a probabilistic formulation. Several variants of the VAE have been developed to enhance performance or address specific limitations, such as $\beta$-VAE (for disentangled representations) \citep{higgins2017beta}, Conditional VAE (CVAE, for supervised generation) \citep{sohn2015learning}, InfoVAE \citep{zhao2017infovae}, and Vector Quantized VAE (VQ-VAE) \citep{van2017neural}.

\subsubsection{Generative Adversarial Network (GAN) \citep{goodfellow2014generative}}
A Generative Adversarial Network (GAN) consists of two components: a generator $g_d$, which maps latent vectors from a low-dimensional space to the data space, and a discriminator $g_c: \mathbb{R}^n \to \mathbb{R}$, which aims to distinguish between real samples and those generated by the generator. The two networks are trained in an adversarial setup: the discriminator is optimized to accurately classify real versus fake data, while the generator is trained to produce samples that are indistinguishable from real ones, thereby "fooling" the discriminator. This adversarial training framework leads to a minimax optimization problem. Numerous variants of GANs have been proposed to improve training stability and generation quality, such as Conditional GAN \citep{mirza2014conditional}, WGAN \citep{arjovsky2017wasserstein}, StyleGAN \citep{karras2019style, karras2020analyzing, karras2021alias}, SinGAN \citep{shaham2019singan}, and CycleGAN \citep{zhu2017unpaired}.

\subsubsection{Latent Diffusion Model (LDM) \citep{rombach2022high}}
A Latent Diffusion Model (LDM) is a generative model that, like a VAE, operates in a compressed latent space. However, unlike VAEs, LDMs use a diffusion process to model the data distribution. Specifically, an encoder $g_e$ first maps high-dimensional data to a lower-dimensional latent space ($n \gg m$). A forward diffusion process $g_{df}: \mathbb{R}^m \to \mathbb{R}^m \to \cdots \to \mathbb{R}^m$ then gradually corrupts the latent variables by adding Gaussian noise over multiple time steps, producing a sequence of noisy latent representations. A learned denoising process $g_{dn}$, typically implemented as a U-Net, then reverses this diffusion to recover clean latent samples. Finally, a decoder $g_d$ maps the denoised latent variables back to the data space:
$$g_d \circ g_{dn} \circ g_{df} \circ g_e: \mathbb{R}^n \to \mathbb{R}^n.$$
The encoder and decoder are typically built from convolutional neural networks (CNNs), while the denoiser is trained using a noise prediction objective similar to that in denoising diffusion probabilistic models (DDPMs) \citep{ho2020denoising}. Several variants of LDMs exist, including Stable Diffusion \citep{rombach2022high}, Imagen \citep{saharia2022photorealistic}, and DALL·E 3 \citep{betker2023improving}.

\subsubsection{Hallucinator \citep{liu2022dataset}}
A hallucinator is a type of image generator that transforms a basis vector $\mathbf{x} \in \mathbb{R}^{h \times w \times c}$ into an output image $\mathbf{x}' \in \mathbb{R}^{h' \times w' \times c'}$ through a composition of three components: an encoder, a linear transformation, and a decoder:
$$g_d \circ g_l \circ g_e: \mathbb{R}^{h \times w \times c} \to \mathbb{R}^{h' \times w' \times c'}.$$
Here, the encoder $g_e$ and decoder $g_d$ are typically implemented using convolutional neural networks (CNNs), while the linear transformer $g_l$ performs an affine transformation of the form $g_l (\mathbf{x}) = \sigma \mathbf{x} + \mu$, where $\sigma$ and $\mu$ are learnable parameters. Importantly, the shape of the input basis vector $\mathbf{x}$ need not match the shape of the generated image $\mathbf{x}'$, which allows for flexible dimensionality and structured generation.

\subsubsection{Tensor decomposition \citep{yang2024neural}} \label{sec:tensor}
A tensor decomposition operator maps a high-dimensional tensor (e.g. a 4D-tensor for images of standard size $b \times c \times h \times w$, where $b, c, h, w$ represent batch size, channels, height, and width respectively) into a pair of tensors: a spectral tensor $\mathbf{T}$ and a kernel tensor $\mathbf{K}$,
$$\mathcal{A}: \mathbb{R}^{n_1 \times n_2 \times \cdots \times n_d} \to \mathbb{R}^{n_1' \times n_2' \times \cdots \times n_d'} \times \mathbb{R}^{n_1' \times n_2' \times \cdots \times n_d' \times n_1 \times n_2 \times \cdots \times n_d}, \; \mathbf{X} \mapsto (\mathbf{T}, \mathbf{K}),$$
such that the original tensor $\mathbf{X}$ can be reconstructed by the tensor product of these components $\mathbf{X} = \mathbf{T} \otimes \mathbf{K}$. We further assume the kernel tensor $\mathbf{K}$ is separable, meaning it can be expressed as a tensor product of lower-dimensional matrices $\mathbf{K} = \hat{\mathbf{K}}_1 \otimes \cdots \otimes \hat{\mathbf{K}}_d$, where each $\hat{\mathbf{K}}_i \in \mathbb{R}^{n_i' \times n_i}$ is a 2D tensor. The original tensor $\mathbf{X}$ contains $\prod_{i = 1}^d n_i$ elements. In contrast, storing $(\mathbf{T}, \mathbf{K})$ requires only $\prod_{i = 1}^d n_i' + \sum_{i = 1}^d n_i n_i' = O(\sum_{i = 1}^d n_i)$ elements, since in practice the dimensions are chosen such that $n_i' \ll n_i$ in practice, resulting in a substantial reduction in storage.

\subsection{Loss Regularization} \label{sec:loss_reg}
The loss landscape of a neural network's training objective is typically non-convex and highly complex, making it challenging to find a globally optimal solution. To address this, various loss regularization terms are commonly introduced in practice. Depending on the desired property, these regularizations can serve different purposes: accelerating convergence, preventing overfitting, or encouraging additional diversity or similarity within the synthetic dataset.

\subsubsection{Diversity term \citep{zhang2023dataset}}
Let $\mathcal{S} \subseteq \mathbb{R}^n$ denote a synthetic dataset, and let $\mathcal{S}^y \subseteq \mathcal{S}$ be the subset of samples labeled with class $y$. To balance intra-class diversity and inter-class discriminability, while also mitigating overfitting during training on synthetic data, we introduce the intra-class diversity loss:
$$L_{intra} (\mathbf{x}_i^y, \tau) = - \log \frac{e^{\langle h(\mathbf{x}_i^y), c(y)\rangle / \tau}}{e^{\langle h(\mathbf{x}_i^y), c(y)\rangle / \tau} + \sum_{i \ne j} e^{\langle h(\mathbf{x}_i^y), h(\mathbf{x}_j^y)\rangle / \tau}},$$
where $\mathbf{x}_i^y \in \mathcal{S}^y$ is a synthetic sample with label $y$, $c(y)$ is a learned class embedding for label $y$, $h(\cdot)$ denotes the feature extractor (e.g., a neural network encoder), and $\tau > 0$ is a temperature hyperparameter controlling the sharpness of similarity scores. Minimizing this loss promotes two effects: (1) it encourages a higher similarity (inner product) between the sample representation $h(\mathbf{x}_i^y)$ and its corresponding class embedding $c(y)$, reinforcing correct class prediction, and (2) it penalizes high similarity between different samples $\mathbf{x}_i^y$ and $\mathbf{x}_j^y$ ($i \ne j$) of the same class, encouraging feature diversity within class $y$. This objective helps ensure that synthetic samples are both representative and diverse, improving generalization while maintaining class consistency.

Complementary to the intra-class diversity objective, we introduce the inter-class diversity loss to promote separation between synthetic samples from different classes. It is defined as:
$$L_{inter} (\mathcal{S}, \tau) = \sum_{y_1 \ne y_2} \text{ReLU} \big(\tau - \big\|\overline{h(\mathcal{S}^{y_1})} - \overline{h(\mathcal{S}^{y_2})}\big\|_2\big),$$
where $\tau > 0$ is a margin (threshold) hyperparameter. This loss penalizes class pairs whose mean feature embeddings are too close, i.e., when their Euclidean distance falls below the threshold $\tau$. By minimizing $L_{\text{inter}}$, the feature centroids of different classes are encouraged to be at least $\tau$ apart, promoting inter-class separability and reducing class confusion in the learned representation space. If the difference between the mean features of two different classes exceeds the margin, then the penalty to this class pair is zero. 

\subsubsection{Representative term \citep{gu2024efficient}}
To ensure that the small-scale synthetic dataset $\mathcal{S}$ effectively captures the characteristics of the original dataset $\mathcal{T}$, we introduce a representativeness loss that encourages each synthetic sample to align closely with at least one real data point. For a synthetic sample $\mathbf{x}_1 \in \mathcal{S}$, the loss is defined as:
$$L_{\text{rep}} (\mathbf{x}_1) = - \min_{\mathbf{x}_2 \in \mathcal{T}} \sigma (\mathbf{x}_1, \mathbf{x}_2),$$
where $\sigma(\mathbf{x}_1, \mathbf{x}_2)$ denotes the cosine similarity $\sigma (\mathbf{x}_1, \mathbf{x}_2) = \frac{\langle \mathbf{x}_1, \mathbf{x}_2 \rangle}{\|\mathbf{x}_1\|_2 \|\mathbf{x}_2\|_2}$. Minimizing $L_{\text{rep}}$ encourages each synthetic point to closely resemble at least one sample in the original dataset, improving its representational fidelity. To further enhance coverage, we introduce a diversity loss that discourages redundancy among synthetic samples. For a given $\mathbf{x}_1 \in \mathcal{S}$, the loss is defined as:
$$L_{\text{div}} (\mathbf{x}_1) = \max_{\mathbf{x}_2 \in \mathcal{S}, \mathbf{x}_1 \ne \mathbf{x}_2} \sigma (\mathbf{x}_1, \mathbf{x}_2).$$
Minimizing $L_{\text{div}}$ pushes synthetic samples away from each other in feature space, promoting diversity within $\mathcal{S}$. Together, minimizing both $L_{\text{rep}}$ and $L_{\text{div}}$ ensures that each synthetic sample is representative of the original data while remaining distinct from other synthetic points, striking a balance between coverage and diversity.

\subsubsection{Contrastive term \citep{liu2022dataset}}
Let $\mathcal{H}$ be a finite set of hypotheses (e.g., neural network models). To evaluate how consistently synthetic data features are represented across different models within the same class, we introduce the adversarial contrastive loss:
$$L_{con} (\mathcal{H}, \mathcal{S}^y, \tau) = - \frac{1}{|\mathcal{H}|^2} \frac{1}{|\mathcal{S}^y|} \sum_{j \ne k} \sum_{i = 1}^{|\mathcal{S}^y|} \log \frac{e^{\langle h_j (\mathbf{x}_i^y), h_k (\mathbf{x}_i^y)\rangle / \tau}}{\sum_{t = 1}^{|\mathcal{S}^y|} e^{\langle h_j (\mathbf{x}_i^y), h_k (\mathbf{x}_t^y)\rangle / \tau}}.$$
where $h_j, h_k \in \mathcal{H}$ are hypotheses, $\mathbf{x}_i^y \in \mathcal{S}^y$ are synthetic samples from class $y$, and $\tau$ is a temperature scaling hyperparameter. This loss encourages consistency across models by aligning features extracted by different hypotheses for the same input, while contrasting against features from other class-$y$ samples. In parallel, to quantify feature diversity within class $y$ across models, we define the cosine similarity loss:
$$L_{cos} (\mathcal{H}, \mathcal{S}^y, \tau) = \frac{1}{|\mathcal{H}|^2} \frac{1}{|\mathcal{S}^y|} \sum_{j \ne k} \sum_{i = 1}^{|\mathcal{S}^y|} \sigma (h_j (\mathbf{x}_i^y), h_k (\mathbf{x}_i^y)),$$
where $\sigma(\cdot,\cdot)$ denotes the cosine similarity $\sigma (\mathbf{x}_1, \mathbf{x}_2) = \frac{\langle \mathbf{x}_1, \mathbf{x}_2 \rangle}{\|\mathbf{x}_1\|_2 \|\mathbf{x}_2\|_2}$. By minimizing both $L_{\text{con}}$ and $L_{\text{cos}}$, we enforce a dual objective: (1) inter-model alignment of features within each class to ensure semantic consistency, and (2) intra-class variation to maintain feature diversity, avoiding degenerate or overly uniform representations across the hypothesis ensemble.

\subsubsection{Discrimination term \citep{wang2022cafe}}
Let $\mathcal{T}$ be a finite dataset, $\mathcal{S}$ a synthetic dataset, and $h$ a trained model. For each class label $y \in {1, \dots, C}$, denote by $\overline{h(\mathcal{S}^y)}$ the mean feature representation of synthetic samples from class $y$. For each sample $\mathbf{x}_i^y \in \mathcal{T}^y$ in the real dataset, define the class-similarity score vector $\mathbf{O}_i^y = [\langle h(\mathbf{x}_i^y), \overline{h(\mathcal{S}^y)} \rangle]_{y = 1}^C \in \mathbb{R}^C$. Intuitively, $\mathbf{O}_i^y$ encodes the similarity between the feature of a real sample and the average synthetic features from each class. We then define the discrimination loss as:
$$L_{dis} = - \frac{1}{C} \frac{1}{|\mathcal{T}^y|} \sum_{y = 1}^C \sum_{i = 1}^{|\mathcal{T}^y|} \log c (\mathbf{O}_i^y),$$
where $c (\mathbf{O}_i^y)_y$ denotes the softmax probability assigned to the true class $y$. This loss measures how well the average synthetic class features serve as discriminative class prototypes for real data. By minimizing $L_{\text{dis}}$, the synthetic dataset $\mathcal{S}$ is encouraged to produce class-wise features that closely align with the features extracted from real samples in $\mathcal{T}$, thereby improving the semantic alignment between synthetic and real data.

\subsection{Data Augmentation} \label{sec:aug}
While generative modeling seeks to reduce the dimensionality of the data space, a complementary technique is data augmentation, which aims to increase the diversity or complexity of the data. The goal is to improve downstream performance by training models on augmented data that better captures the variability of the original distribution.

Given a 4D tensor representing a batch of images of shape $b \times c \times h \times w$, where $b$, $c$, $h$, and $w$ denote the batch size, number of channels, height, and width respectively, the multi-formation operator \citep{kim2022dataset} increases the number of channels by a factor of $r$, referred to as the formation factor:
$$\mathcal{A}_r: \mathbb{R}^{b \times c \times h \times w} \to \mathbb{R}^{b \times c(r^2+1) \times h \times w}.$$
The illustration of multi-formation process is shown in \Cref{fig:multi-formation}.
\begin{figure}[h]
    \centering
    \includegraphics[width=\linewidth]{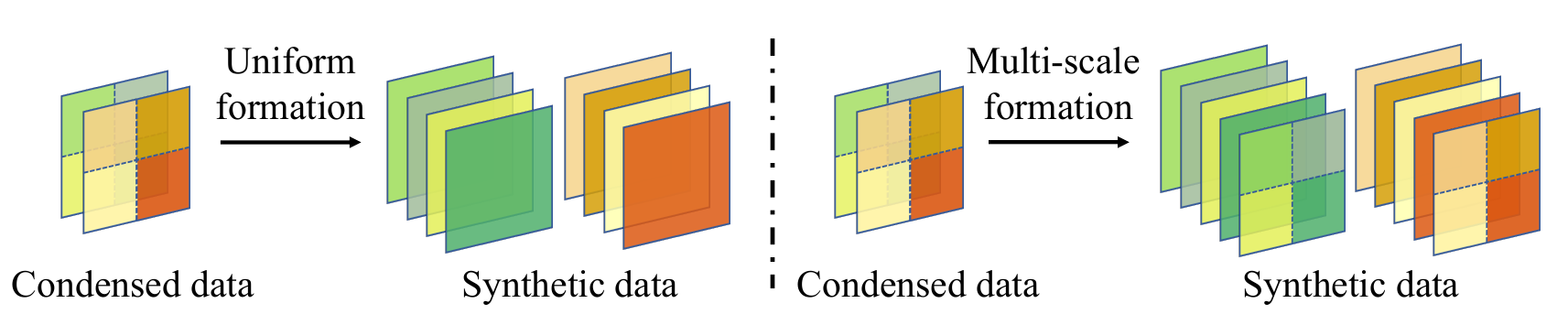}
    \caption{Illustration of multi-formation operator with a factor of 2.}
    \label{fig:multi-formation}
\end{figure}

In recent work, \citep{zhou2024dataset} propose a channel-wise multi-formation operator for data augmentation, defined as
$$\mathcal{A}_c: \mathbb{R}^{b \times c \times h \times w} \to \mathbb{R}^{4 b \times c \times h \times w},$$
which expands the dataset by a factor of four. Specifically, each input sample is augmented using three $1 \times 1$ convolutional layers, each acting as a color mapping function. As a result, a batch of $b$ samples is transformed into $3b$ additional color-mapped samples via $3b$ distinct convolution layers. The original and augmented (color-shifted) data are then concatenated along the batch dimension, and the resulting tensor can be used for model training or as input to dataset condensation pipelines.

\end{document}